\newtheorem{prop}{Proposition}
\DeclarePairedDelimiter{\ceil}{\lceil}{\rceil}
\title[Planning in Hierarchical Reinforcement Learning]{Planning in Hierarchical Reinforcement Learning:\\ Guarantees for Using Local Policies}
\begin{document}

\maketitle

\begin{abstract}
We consider a setting of hierarchical reinforcement learning, in which the reward is a sum of components. For each component, we are given a policy that maximizes it, and our goal is to assemble a policy from the individual policies that maximize the sum of the components. We provide theoretical guarantees for assembling such policies in deterministic MDPs with collectible rewards. Our approach builds on formulating this problem as a traveling salesman problem with a discounted reward. We focus on local solutions, i.e., policies that only use information from the current state; thus, they are easy to implement and do not require substantial computational resources. We propose three local stochastic policies and prove that they guarantee better performance than any deterministic local policy in the worst case; experimental results suggest that they also perform better on average. 
\end{abstract}

\section{Introduction}
One of the unique characteristics of human problem solving is the ability to represent the world in different granularities. When we plan a trip, we first choose the destinations we want to visit and only then decide what to do at each destination. Hierarchical reasoning enables us to break a complex task into simpler ones that are computationally tractable to reason about. Nevertheless, the most successful Reinforcement Learning (RL) algorithms perform planning in a single abstraction level. 

RL provides a general framework for optimizing decisions in dynamic environments. However, scaling it to real-world problems suffers from the curse of dimensionality, that is, coping with large state spaces, action spaces, and long horizons. The most common approach to deal with large state spaces is to approximate the value function or the policy, making it possible to generalize across different states \citep{tesauro1995temporal,mnih2015human}. For long horizons, combining Monte Carlo simulation with value and policy networks was shown to search among game outcomes efficiently, leading to a super-human performance in playing Go, Chess, and Poker \citep{silver2016mastering,moravvcik2017deepstack}. Another long-standing approach for dealing with long horizons is to introduce hierarchy into the problem (see \citep{barto2003recent} for a survey). In particular,  \citet{sutton1999between} extended the RL formulation to include options -- local policies for taking actions over a period of time. An option $o$ is formally defined as a three-tuple $\{ I,\pi,\beta \}$, where $I$ is a set of option initiation states, $\pi$ is the option policy, and $\beta$ is a set of option termination states. The options framework presents a two-level hierarchy, where options to achieve sub-goals are learned separately, and policy over options selects among options to accomplish the global goal. Hierarchical solutions based on this formulation simplify the problem and demonstrate superior performance in challenging environments \citep{tessler2017deep,vezhnevets2017feudal,bacon2017option}.

In this work, we focus on a specific type of hierarchy - reward function decomposition - that dates back to the works of \citep{humphrys1996action,karlsson1997learning} and was recently combined with deep learning \citep{van2017hybrid}. In this formulation, the goal of each option $i$ is to maximize a local reward function $R_i$, while the final goal is to maximize the sum of rewards $R_M=\sum R_i$. Each option is trained separately and provides a policy and its value function. The goal is to learn a policy over options that uses the value functions of the options to select among them. That way, each option is responsible for solving a simple task, and the options' optimal policies are learned in parallel on different machines. 

Formally, a set of options defined over a Markov decision process (MDP) constitutes a semi-MDP (SMDP), and the theory of SMDPs provides the foundation for the theory of options. In particular, the policy over options can be learned using an SMDP algorithm \citep{sutton1999between}. A different approach is to use predefined rules to select among options. Such rules allow us to derive policies for the MDP $M$ (to maximize the sum of rewards) by learning options (without learning directly in  $M$), such that learning is fully decentralized. For example, choosing the option with the largest value function \citep{humphrys1996action,barreto2016successor}, or choosing the action that maximizes the sum of
the values given to it by the options \citep{karlsson1997learning}. The goal of this work is to provide theoretical guarantees for using such rules. 

We focus on a specific case where the dynamics are deterministic, and the individual rewards correspond to collectible items, which is common in navigation benchmarks \citep{tessler2017deep,beattie2016deepmind}. We denote such an MDP by $M$. The challenge with collectible rewards is that the state changes each time we collect a reward (the subset of available rewards is part of the state). Notice that if this information is not included in the state, then it is not a Markov state. Since all the combinations of remaining items have to be considered, the state space grows exponentially with the number of rewards. We, therefore, focus on solving an SMDP $M_s$ that is a composition of $M$ with a set of (optimal) options for collecting each one of the rewards. Each option can be initiated in any state, its policy is to collect a single reward, and it terminates once it collected it. Finding the optimal policy in $M_s$ remains a hard problem; the size of the state space reduces but remains exponential.  

We address this issue in two steps. First, we show that finding the optimal policy in $M_s$ (the optimal policy over options) is equivalent to solving a Reward Discounted Traveling Salesman Problem (RD-TSP). Then, we derive and analyze approximate solutions to the RD-TSP. Similar to the classical TSP, the goal in the RD-TSP is to find an order to collect all the rewards; but instead of finding the shortest tour, we maximize the discounted cumulative sum of the rewards (Definition \ref{def:3}). Not surprisingly, computing an optimal solution to the RD-TSP is NP-hard \citep{blum2007approximation}. 
A brute force approach for solving the RD-TSP requires evaluating all the $n!$ possible tours connecting the $n$ rewards; an adaptation of the Bellman–Held–Karp algorithm\footnote{Dynamic programming solution to the TSP \citep{bellman1962dynamic,held1962dynamic}.} to the RD-TSP (Algorithm 4 in the supplementary material) is identical to tabular Q-learning on the SMDP $M_s$, and requires exponential time. This makes the task of computing the optimal policy for our SMDP infeasible.\footnote{The Hardness results for RD-TSP do not rule out efficient solutions for special MDPs. In the supplementary, we provide exact polynomial-time solutions for the case in which the MDP is a line and when it is a star.}  \cite{blum2007approximation} proposed a polynomial-time algorithm for RD-TSP that computes a policy which collects at least $0.15$ fraction of the optimal discounted return, which was later improved to $0.19$  \citep{farbstein2016discounted}. These algorithms need to know the entire SMDP to compute their approximately optimal policies.

In contrast, we focus on deriving and analyzing approximate solutions that use only \textbf{local} information, i.e., they only observe the value of each option from the current state. Such \textbf{local policies} are straightforward to implement and are computationally efficient. The reinforcement learning community is already using simple local approximation algorithms for hierarchical RL \cite{borsa2018universal,hansen2019fast,barreto2016successor,barreto2018transfer,barreto2019option,van2017hybrid}. We hope that our research provides theoretical support for comparing local heuristics, and in addition introduces new reasonable local heuristics. Specifically, we prove worst-case guarantees on the reward collected by these algorithms relative to the optimal solution (given optimal options). We also prove bounds on the maximum reward that such local policies can collect. In our experiments, we compare the performance of these local policies in the planning setup (where all of our assumptions hold), and also during learning (with suboptimal options), and with stochastic dynamics.
%

{\bf Our results:} We establish impossibility results for local policies, showing that no deterministic local policy can guarantee a reward larger than $ 24 \text{OPT} / n $ for any  MDP, and no stochastic policy can guarantee a reward larger than $8 \text{OPT} / \sqrt{n}$ (where OPT denotes the value of the optimal solution). These impossibility results imply that the Nearest Neighbor (NN) algorithm that iteratively collects the closest reward (and thereby a total of at least $\text{OPT}/n$ reward) is optimal up to a constant factor amongst all deterministic local policies.

On the positive side, we propose three simple stochastic policies that outperform NN. The best of them combines NN with a Random Depth First Search (RDFS) and guarantees performance of at least $\Omega \left( \text{OPT} / \sqrt{n} \right)$ when OPT achieves $\Omega \left(n \right)$,  and at least $\Omega (\text{OPT} / n^{2/3} )$ in the general case.
Combining NN with jumping to a random reward and sorting the rewards by their distance from it, has a slightly worse guarantee.
A simple modification of  NN to first jump to a random reward and continues NN from there already improves the guarantee to $\Omega(\text{OPT} \log(n) /n)$.

\section{Problem formulation}
\label{sec:problem}
We consider the standard RL formulation \citep{sutton2018reinforcement}, and focus on MDPs with deterministic dynamics and reward that is decomposed from a sum of collectible rewards (Definition \ref{def:2}). Models that satisfy these properties appear in numerous domains including many maze navigation problems, the Arcade Learning Environment, and games like Chess and Go \citep{bellemare2013arcade,barreto2016successor,tessler2017deep,van2017hybrid}.
\begin{definition}[Collectible Reward Decomposition MDP]
\label{def:2}
An MDP $M$ is called a Collectible Reward Decomposition MDP if it satisfies the following properties: \textbf{(1) Reward Decomposition,} the reward in $M$ represents the sum of the local rewards: $R_M = \sum _{i=1} ^ n R_i$;
\textbf{(2) Collectible Rewards,} each reward signal $\{R_i\}_{i=1}^n$ represents a single collectible reward, i.e., $R_i(s) = 1$ iff $s = s_i$ for some particular state $s_i$ and $R_i(s) = 0$ otherwise. In addition, each reward can only be collected once.
\textbf{(3) Its dynamic is deterministic.} 
\end{definition}

For the state to be Markov, we extend the state to include information regarding all the rewards that were collected so far. Since all the combinations of remaining items have to be considered, the state space grows exponentially with the number of rewards. To address this issue, we now propose two reductions from the MDP $M$ into an SMDP $M_s$ and then into an RD-TSP. 

We consider the setup where we are given a set of optimal local policies (options) one per reward. In practice, such options can be learned (in a distributional manner), but for now, we assume that they are given.  Option $j$ tells us how to get to reward $j$ via the shortest path. To take decisions, when we are at state $i$, then we  observe the value $V_j(i)$ of  $o_j$ at state $i$ for each $j$. We have  that  $V_j(i) = \gamma ^ {d_{i,j}}$ where $d_{i,j}$ is the length of the shortest path from $i$ to $j$. Notice that at any state, an optimal policy in $M$  always follows the shortest path to one of the rewards.\footnote{To see this, assume there exists a policy $\mu$ that does not follow the shortest path from some state $k$ to the next reward-state $k'$. Then, we can improve $\mu$ by taking the shortest path from $k$ to $k'$, contradicting the optimality of $\mu$. This implies that an optimal policy on $M$ is a composition of the local options $\{o _i \}_{i=1}^n$.} Given that the dynamics are deterministic, an optimal policy in $M$ makes decisions only at states which contain rewards. In other words, once the policy arrived at a reward-state $i$ and decided to go to a reward-state $j$, then it will follow the optimal policies $o_j$ until it reaches $j$.\footnote{This is not true if $P$ is stochastic. To see this, recall that the stochastic shortest path is only the shortest in expectation. Thus, stochasticity may lead to states in which it is better to change the target reward.}

For a \emph{collectible reward decomposition} MDP $M$ (Definition \ref{def:2}), an optimal policy can be derived in an SMDP \citep{sutton1999between} denoted by $M_s$. The state-space of $M_s$ contains only the initial state $s_0$ and the reward states $\{s_i \}_{i=1}^n$. The action space is replaced by the set of optimal options $\{o_i \}_{i=1}^n$ for collecting each one of the rewards. These options can start in any state of $M_s$ and terminate once they collect the reward. While this construction reduces the size of the state space to include only reward states, the size of the state space remains exponential (since all the combinations of remaining items have to be considered, the state space grows exponentially with the number of rewards). We will soon address this computational issue by deriving and analyzing approximate solutions that are computationally feasible. But before we do that, we quickly discuss the reduction from the MDP $M$ to the SMDP $M_s$.


In general, optimal policies for SMDPs are not guaranteed to be optimal in the original MDP, i.e., they do not achieve the same reward as the optimal policy in $M$ that uses only primitive actions. One trivial exception is the case that the set of options in $M_s$ includes the set of primitive actions (and perhaps also options) \citep{sutton1999between}. Another example is landmark options \citep{mann2015approximate}, that plan to reach a specific state in a deterministic MDP. Given a set of landmark options (that does not include primitive actions), \citeauthor{mann2015approximate} showed that the optimal policy in $M_s$ is also optimal in $M$ (under a variation of Definition \ref{def:2}). In addition, they analyzed sub-optimal options and stochastic dynamics \citep{mann2015approximate}, which may help to extend our results in future work. 

Now that we understand that an optimal policy for $M$ can be found in $M_s$, we make another reduction, from $M_s$ to an RD-TSP (Definition \ref{def:3}).   

\begin{definition}[RD-TSP] 
\label{def:3}
Given an undirected graph with $n$ nodes and edges $e_{i,j}$ of lengths $d_{i,j}$, find a path (a set of indices $\{i_t \}_{t=1}^n$) that maximizes the discounted cumulative return:$$\enspace \enspace \{i_t^* \}_{t=1}^n=\underset{\{i_t \}_{t=1}^n \in \text{perm} \{ 1, .., n\} }{\text{arg max}}\sum_{j=0}^{n-1} \gamma ^ {\sum _{t=0}^{j} d_{i_t,i_{t+1}}}.$$
\end{definition}

Throughout the paper, we will compare the performance of different algorithms with that of the optimal solution. We will refer to the optimal solution and to its value as OPT (no confusion will arise). 
\begin{definition}[OPT]
\label{def:OPT}
Given an RD-TSP (Definition \ref{def:3}), the value of the optimal solution is $$\text{OPT} = \underset{\{i_t \}_{t=1}^n \in \text{perm} \{ 1, .., n\} }{\text{max}}\sum_{j=0}^{n-1} \gamma ^ {\sum _{t=0}^{j} d_{i_t,i_{t+1}}} .$$
\end{definition}

The following Proposition suggests that an optimal policy on $M_s$ can be derived by solving an RD-TSP (Definition \ref{def:3}). Here, a node corresponds to a state in $M_s$ (initial and reward states in $M$), and edge $e_{i,j}$ corresponds to choosing the optimal option $o_j $ in state $i$, and the length of an edge $d_{i,j}$ corresponds to time it takes to follow an option $j$ from state $i$ until it collects the reward in state $j$ ($\log_ \gamma V_j(i)$).

\begin{prop}[MDP to RD-TSP] 
\label{prop}
Given an MDP $M$ that satisfies Definition \ref{def:2} with $n$ rewards and a set of options $\left\{ o_i \right\}_{i=1}^n$ for collecting them, define a graph, $G$, with nodes corresponding to the initial state and the reward-states of $M$. Define the length $d_{i,j}$ of an edge $e_{i,j}$ in $G$ to be $\log_\gamma \big(V_j(i)\big)$, i.e., the logartihm of the value of following option $o_j$ from state $i$. Then, an optimal policy in $M$ can be derived by solving an RD-TSP in $G$. 
\end{prop}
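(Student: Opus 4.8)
The plan is to establish a value-preserving correspondence between policies in $M_s$ and permutations (tours) in the graph $G$, and then observe that the objective each one optimizes is literally the same sum. First I would set up the bijection at the level of trajectories. A deterministic policy over options in $M_s$ that collects all $n$ rewards induces, by the order in which it collects them, a permutation $\{i_t\}_{t=0}^{n}$ of the reward-states (with $i_0 = s_0$); conversely, any such permutation specifies a policy that, starting at $s_0$, repeatedly invokes the option $o_{i_{t+1}}$ from the current reward-state $i_t$. The key structural fact I would invoke here is the one already argued in the excerpt's footnotes: because the dynamics are deterministic, an optimal policy in $M$ makes decisions only at reward-states, and once it commits to a target reward it follows the shortest-path option until collecting it. This is exactly what justifies restricting attention to such option-ordering policies without loss of optimality.

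The second step is the value computation. Under the permutation $\{i_t\}$, the time at which reward $i_{j+1}$ is collected is the cumulative travel time $\sum_{t=0}^{j} d_{i_t,i_{t+1}}$, where $d_{i,j} = \log_\gamma V_j(i)$ is the edge length defined in the proposition. Since each collectible reward has unit value and the return is discounted by $\gamma$ per time step, the discounted value that the policy accrues from reward $i_{j+1}$ is $\gamma^{\sum_{t=0}^{j} d_{i_t,i_{t+1}}}$. Summing over all rewards gives total value $\sum_{j=0}^{n-1} \gamma^{\sum_{t=0}^{j} d_{i_t,i_{t+1}}}$, which is precisely the RD-TSP objective in Definition \ref{def:3}. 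I would note here that $d_{i,j} = \log_\gamma V_j(i) = d_{i,j}^{\text{shortest}}$ reproduces the shortest-path length exactly because $V_j(i) = \gamma^{d_{i,j}}$ for the optimal option, so the edge lengths in $G$ coincide with true shortest-path distances in $M$ and the discount factors match term by term.

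Putting these together, the value of any policy in $M_s$ equals the RD-TSP objective evaluated at the corresponding permutation, and the value of any permutation equals the value of the corresponding policy; maximizing over policies and maximizing over permutations therefore yield the same optimum, so an optimal RD-TSP tour in $G$ induces an optimal policy in $M_s$, which by the earlier reduction is optimal in $M$. The step I expect to require the most care is the first one, \emph{justifying that the optimal policy need only make decisions at reward-states and commits to shortest-path options between them}, since this is what licenses the reduction from a general policy over options to a mere permutation. In the deterministic, collectible-reward setting this follows from the exchange/improvement argument sketched in the footnotes (any deviation from a shortest path to the next collected reward can be shortcut without sacrificing any later reward, strictly improving the discounted return), but one should state it cleanly to rule out policies that, say, revisit states or collect rewards in a non-simple order. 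The value-accounting in the second step is routine once the correspondence is fixed.
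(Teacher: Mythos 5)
Your proposal is correct and follows essentially the same route as the paper, which proves this proposition only informally via the surrounding text and footnotes: the exchange argument showing an optimal policy follows shortest paths and decides only at reward-states, and the observation that $d_{i,j}=\log_\gamma V_j(i)$ makes the discounted return of an option-ordering policy coincide term-by-term with the RD-TSP objective of the corresponding permutation. Your write-up is in fact somewhat more careful than the paper's, since it makes the policy--permutation bijection and the value-accounting explicit.
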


So far, we have shown that finding the optimal policy in $M$ is equivalent to finding it in $M_s$ or alternatively, by solving the equivalent RD-TSP. However, all of these reductions did not seem to be useful, as the size of the state space remained exponential. Nevertheless, the goal of these reductions will soon become clear; it allows us to derive approximate solutions to the RD-TSP and analyze their performance in $M$. 

We focus on approximate solutions in the class of \textbf{Local Policies}. A  local policy is  a mapping whose inputs are:
\begin{enumerate}
    \item The current state $x$.
    \item The history $h$ containing the previous steps taken by the policy; in particular, $h$ includes the rewards that we have already collected.
    \item The discounted return for each option from the current state, i.e., $\{V_i (x)\}_{i=1}^n$,
 and whose output is a distribution over the options. 
\end{enumerate}
Since local policies make decisions based on local information, they are straightforward to implement and are computationally efficient.

Notice that a local policy does not have full information on the MDP (but only on local distances).\footnote{Notice that the optimal global policy, (computed given the entire SMDP as input), has the same history dependence as local policies. However, while the global policy is deterministic, our results show that for local policies, stochastic policies are better than deterministic ones. It follows that the benefit of stochastic selection is due to the local information and not due to the dependence on history. This implies that locality is related to partial observability, for which it is known that stochastic policies can be better \citep{aumann1996absent}.} 
\begin{definition}
\label{def:4}
A local policy $\pi_{local}$ is a mapping: 
$\pi_{local} (x,h,\{V_i (x)\}_{i=1}^n) \rightarrow \Delta (\{o_i\}_{i=1}^n), $
where $\Delta(X)$ is the set of distributions over a finite set $X$.
\end{definition}

In the next sections, we propose and analyze specific local policies. We distinguish between policies that are deterministic and policies that are stochastic. For each class, we derive an impossibility result and propose policies that match it. 

\section{NN}
We start with an analysis of  the  natural Nearest Neighbor (NN) heuristics for the TSP. This algorithm chooses at each node the reward that is closest to the node (amongst the remaining rewards). In the context of our problem, NN is the policy that selects the option with the highest estimated value, exactly like GPI \citep{barreto2016successor}. We shall abuse the notation slightly and use the same name (e.g., NN) for the algorithm itself and its value; no confusion will arise. For TSP (without discount) in general graphs, we know that \citep{rosenkrantz1977analysis}:
\begin{center}
$\frac{1}{3} \log_2(n+1) +\frac{4}{9} \le \frac{\text{NN}}{\text{OPT}} \le \frac{1}{2} \ceil{\log_2(n)}+\frac{1}{2},$
\end{center}
where in the above equation, OPT refers to the value of the optimal solution in the TSP (and not that of the RD-TSP). However, for the RD-TSP, the NN algorithm only guarantees a value of $\text{OPT}/n,$ as the following Theorem states.
\begin{theorem}[NN Performance]
\label{theo:1}
For any MDP satisfying Definition \ref{def:2} with $n$ rewards, and $\forall \gamma,$   $$\frac{\text{NN}}{\text{OPT}}\ge\frac{1}{n}.$$
\end{theorem}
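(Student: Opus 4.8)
The plan is to exploit the structure of the RD-TSP objective in Definition \ref{def:3}: every reward contributes a factor of the form $\gamma^{(\text{total distance travelled before it is collected})}$, and along any tour this total distance is monotonically non-decreasing. The whole proof will hinge on a single quantity, $d_0 := \min_i d_{s_0,i}$, the distance from the start state to its nearest reward, which I will use to bound both OPT from above and NN from below.

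For the upper bound on OPT, I would argue as follows. Fix the optimal tour and let $i_1$ be the first reward it collects. The cumulative distance to $i_1$ is $d_{s_0,i_1} \ge d_0$ by definition of $d_0$. Since all edge lengths $d_{i,j}$ are non-negative, the cumulative distance to every subsequently collected reward is at least the cumulative distance to $i_1$, hence at least $d_0$. Because $\gamma \le 1$, the function $\gamma^{x}$ is non-increasing in $x$, so each of the $n$ terms in the optimal objective is at most $\gamma^{d_0}$. Summing the $n$ terms gives $\text{OPT} \le n\,\gamma^{d_0}$.

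For the lower bound on NN, I would note that NN's first move is, by definition, to the reward nearest to $s_0$, so the first term of NN's value is exactly $\gamma^{d_0}$; as every remaining term is non-negative, $\text{NN} \ge \gamma^{d_0}$. Combining the two bounds yields $\tfrac{\text{NN}}{\text{OPT}} \ge \tfrac{\gamma^{d_0}}{n\,\gamma^{d_0}} = \tfrac{1}{n}$, and since $d_0$ cancels, the bound holds uniformly in $\gamma$, as claimed.

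The argument is essentially obstacle-free; the only point requiring genuine care is the cumulative-distance bookkeeping implicit in the index shift of Definition \ref{def:3}, together with the monotonicity observation that the first collected reward is precisely the one carrying the largest discount term. Getting that right is what makes it legitimate to bound all $n$ terms by the single quantity $\gamma^{d_0}$, and it is also what makes NN's first step — rather than its full trajectory — already sufficient to match the bound.
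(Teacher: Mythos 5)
Your proof is correct and follows essentially the same route as the paper's: both bound $\text{OPT}\le n\,\gamma^{d_0}$ using the fact that the optimal tour's first reward is at least as far as the nearest one, and bound $\text{NN}\ge\gamma^{d_0}$ from its first step alone. The only cosmetic difference is that you bound each of the $n$ terms by $\gamma^{d_0}$ individually, whereas the paper factors $\gamma^{d_{0,i^*}}$ out of the sum and bounds the remaining sum by $n$.
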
 

\begin{proof}
Denote by $i^*$ the nearest reward to the origin $s_0$, and by $d_{0,i^*}$ the distance from the origin to $i^*$.
The distance from $s_0$ to the first reward collected by OPT is at least  $d_{0,i^*}$.
 Thus, if $o_0=s_0,o_1,\ldots,o_{n-1}$ are the rewards ordered in the order by which OPT collects them we get that
\begin{align*}
\text{OPT} &= \sum\nolimits _{j=0} ^{n-1} \gamma ^ {\sum _{t=0}^{j} d_{o_t,o_{t+1}}} 
\leq \gamma ^{d_{0,i^*}} (1+ \sum\nolimits_{j=1}^{n-1} \gamma ^ {\sum _{t=1}^{j} d_{o_t,o_{t+1}}} ) \leq n \gamma ^ {d_{0,i^*}} 
\end{align*}
On the other hand, the NN heuristic chooses $i^*$ in the first round, thus, its cumulative reward is at least $\gamma ^{d_{0,i^*}}$ and we get that
$ \frac{\text{NN}}{\text{OPT}} \ge \frac{\gamma ^{d_{0,i^*}}}{n\gamma^{d_{0,i^*}} } = \frac{1}{n}.$
\end{proof}

\section{Impossibility Results for Deterministic Policies}

\begin{figure}[t]
\centering
\begin{subfigure}
    \centering
    \includegraphics[width=0.45\linewidth]{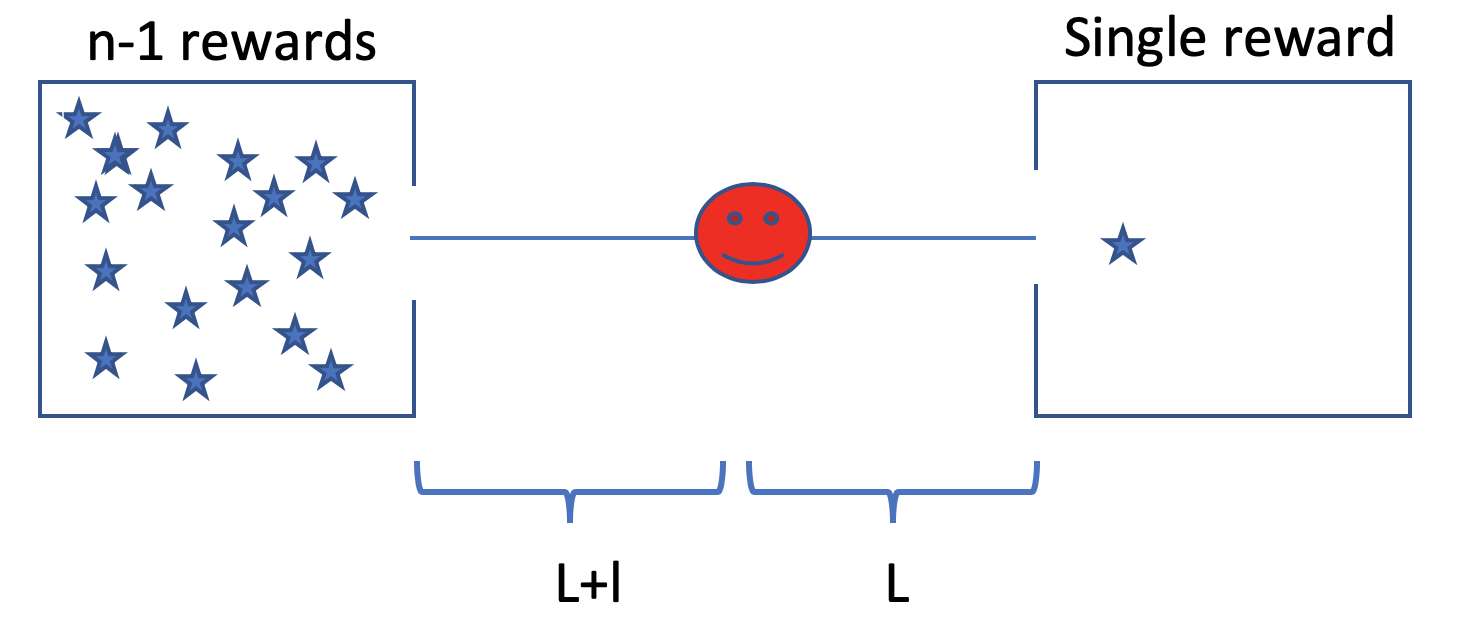}
\end{subfigure}%
\begin{subfigure}
    \centering
    \includegraphics[width=0.45\linewidth]{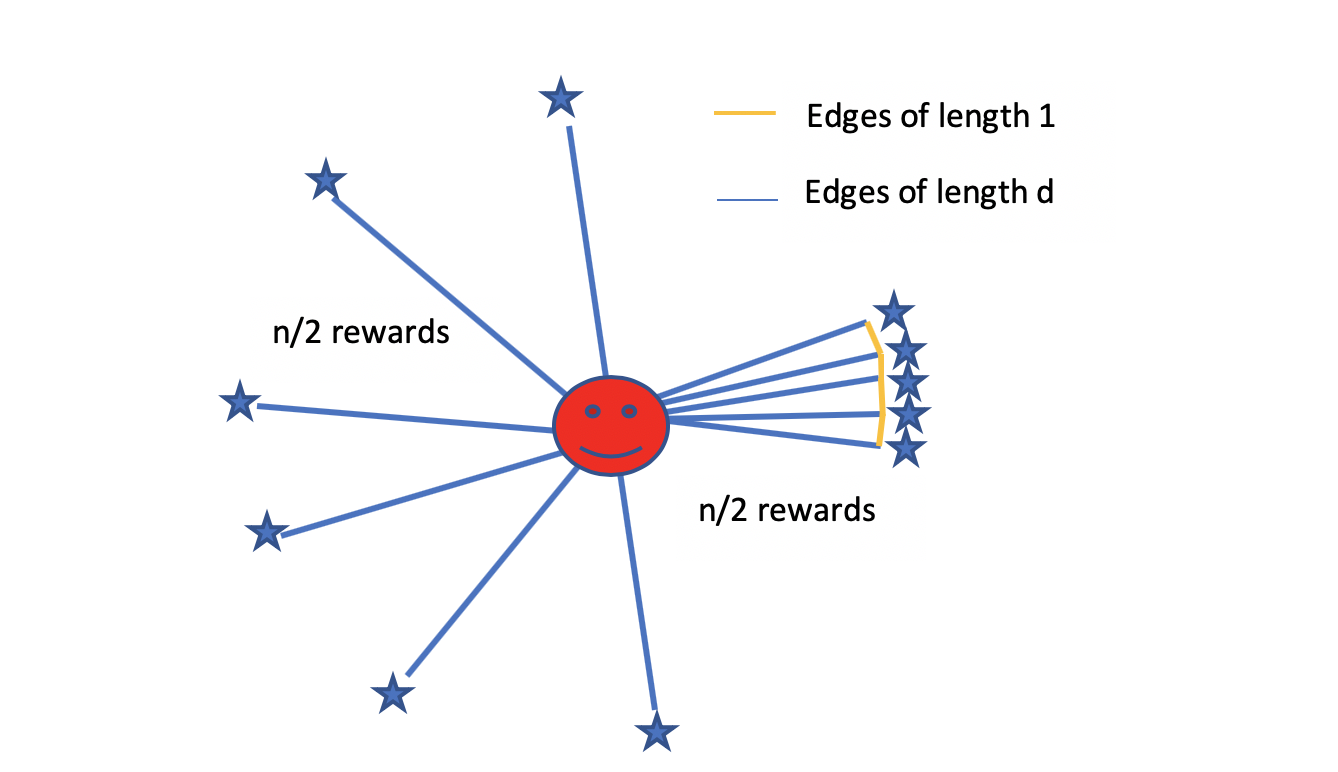}
\end{subfigure}%
\caption{\textbf{Impossibility results} for NN(left) and Deterministic local policies (right).  }
\label{fig:worst_case_mdp}
\end{figure}

The \textbf{NN} heuristic guarantees performance of at least $\text{OPT}/n.$ We now describe an MDP at which NN indeed cannot guarantee more than $\text{OPT}/n.$ Such an MDP is presented in Figure \ref{fig:worst_case_mdp} (left). The agent starts in the center, and there are $n-1$ rewards at a distance $L+l$ to its left and a single reward at a distance $L$ to its right. Inside the left "room," the rewards are connected with short edges (of length much smaller than  $L$). For large values of $L$ (larger than $l$ and the size of each room), only the rewards in the room that is visited first contribute to the collected value because of the discounting. Since the single reward is located slightly closer to the agent, NN collects it first. As a result, it only gets about $\text{OPT}/n$
reward. 

Next, we show an impossibility result for all \textbf{deterministic local policies}, indicating that no such policy can guarantee more than $\text{OPT}/n,$ which makes NN optimal over such policies. An example for the MDP that is used in the proof is presented in Figure \ref{fig:worst_case_mdp} (right).
\begin{theorem}[Impossibility for Deterministic Local Policies] 
\label{theo:2det}
For any deterministic local policy D-Local, $\exists$ MDP with $n$ rewards and a discount factor $\gamma = 1-1/n,$ s.t.:  $ \frac{\text{D-Local}}{\text{OPT}} \leq \frac{24}{n}. $
\end{theorem}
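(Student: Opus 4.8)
The plan is to build, for the given deterministic local policy D-Local, a single adversarial instance (as sketched in Figure \ref{fig:worst_case_mdp}, right) on which its discounted return is only an $O(1/n)$ fraction of OPT. The guiding accounting is that, with $\gamma = 1-1/n$, a tour whose consecutively collected rewards are $O(1)$ apart has value $\sum_{j} \gamma^{O(j)} = \Theta(n)$, whereas a tour that pays $\Theta(n)$ between consecutive rewards has value only $\sum_{k} \gamma^{\Theta(kn)} = \sum_{k} e^{-\Theta(k)} = O(1)$. So I would design a metric on the $n$ reward-nodes that admits one cheap near-Hamiltonian tour, which OPT follows to obtain OPT $= \Omega(n)$, while forcing D-Local to incur $\Theta(n)$ distance between successive rewards it collects, capping its return at $O(1)$; the ratio is then $O(1/n)$, and tuning the constants yields $24/n$.

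Concretely I would proceed as follows. First, I use that D-Local is a deterministic function of purely local data --- the current node, the history, and the distances $\{V_i(x)\}$ to the uncollected rewards --- so its entire trajectory is fixed once the revealed distances are fixed. Second, I run an indistinguishability (decision-tree) argument: maintain a family of candidate metrics all consistent with everything D-Local has observed so far, and, whatever reward D-Local moves to next, keep alive a candidate in which that move is an expensive $\Theta(n)$ jump while the cheap continuation of OPT's tour lies among the rewards it did not choose. Third, I lower-bound OPT by exhibiting the single cheap tour in the resulting instance and summing the geometric series to get $\Omega(n)$. Fourth, I upper-bound D-Local by showing that, by construction, its $k$-th collected reward is reached only at cumulative distance $\Theta(kn)$, so its value telescopes to $O(1)$. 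Finally, I fix the edge lengths and the discount $\gamma = 1 - 1/n$ and track the hidden constants to reach the explicit bound $\frac{\text{D-Local}}{\text{OPT}} \le \frac{24}{n}$.

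I expect the main obstacle to be the design of the metric itself, because a local policy observes the \emph{true} graph distance to every reward from its current node. This rules out naively hiding a reward-rich region behind a shortcut: cheap connections are transitive, so any cheap access to that region would expose a globally short route that D-Local could also take, and, worse, would let it recover after a single wrong step. The crux is therefore to make wrong early commitments \emph{unrecoverable} --- to engineer a tree- or expander-like metric in which standing at the ``wrong'' reward leaves \emph{all} uncollected rewards simultaneously far, so that detours compound --- while keeping the instance simultaneously a valid metric, indistinguishable to D-Local at the moment of each decision, and able to force the $\Theta(n)$ per-step cost. Reconciling these requirements, together with the constant-factor bookkeeping needed to land exactly on $24$, is the technical heart; the geometric-series estimates are then routine.
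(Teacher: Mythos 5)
Your high-level plan coincides with the paper's: an adversary argument over a family of metrics indistinguishable to D-Local at each decision point, with OPT earning $\Omega(n)$ from a cheap hidden tour while D-Local is forced to pay $\Theta(n)$ between successive rewards and so earns only $O(1)$. The problem is that the proposal stops exactly where the proof lives: you defer the design of the metric as ``the technical heart'' and never produce it, so there is no instance on which to run your accounting. The construction that closes this gap is a star: a central start vertex with $n$ leaves, one reward per leaf, every spoke of length $d$ with $\gamma^d = \tfrac12$ (so $d=\Theta(n)$ for $\gamma = 1-1/n$); the family ${\cal G}$ is indexed by which subset of $n/2$ leaves is additionally connected pairwise by edges of length $1$ (a hidden clique). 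This resolves precisely the obstacle you raise about local policies observing true distances: from the center all $n$ rewards are at distance $d$, and from any non-clique leaf every uncollected reward --- clique or not --- is at distance exactly $2d$ via the center, since the length-$1$ edges only shorten distances \emph{within} the clique. The metric is consistent, the cheap region is invisible from outside it, and a deterministic policy's choices form a fixed sequence over all graphs consistent with ``no clique vertex hit yet,'' so the adversary places the clique on the $n/2$ leaves visited last.

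With that instance in hand your estimates go through: OPT enters the clique first and gets at least $\gamma^d\sum_{i=0}^{n/2-1}\gamma^i \ge n/8$ (using $\sum_{i=0}^{n/2-1}\gamma^i\ge n/4$), while D-Local's first $n/2$ rewards cost $2d$ each, contributing $\sum_{i=1}^{n/2}\gamma^{(2i-1)d} = 2\sum_{i=1}^{n/2}4^{-i}\le 2/3$ plus an exponentially discounted tail from the clique, yielding the claimed $24/n$ after tracking constants. Two smaller corrections to your framing: (i) you cannot force cumulative distance $\Theta(kn)$ at \emph{every} step $k$ --- once D-Local stumbles into the clique it finishes cheaply --- but this is unnecessary, since after $n/2$ expensive steps the accumulated discount $4^{-n/2}$ already kills the remainder; (ii) the fully adaptive decision-tree bookkeeping you describe collapses here to the single observation that the policy's view is identical at the center and at every non-clique leaf, which is what makes the argument (and the constant $24$) tractable.
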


\begin{proof}
Consider a family of graphs, ${\cal G}$, each of which consists of a star with a central vertex and $n$ leaves. The starting vertex is the central vertex, and there is a reward at each leaf. The length of each edge is $d$ (s.t $\gamma^d = \frac{1}{2}$).

Each graph of the family ${\cal G}$ corresponds to a different subset of $n/2$ of the leaves, which we connect (pairwise) by edges of length $1$ (the other $n/2$ leaves are only connected to the central vertex). While at the central vertex, local policy cannot distinguish among the $n$ rewards (they all at the same distance from the origin), and therefore its choice is the same for all graphs in ${\cal G}$ (the next decision is also the same and so on, as long as it does not hit one of the $n/2$ special rewards).

Thus, for any given policy, there exists a graph in ${\cal G}$ such that the adjacent $n/2$ rewards are visited last. Since $\gamma = 1-\frac{1}{n}$ we have that 
$\frac{n}{4} \le \sum\nolimits_{i=0} ^{n/2-1} \gamma ^i = \frac{1-\gamma^{n/2}}{1-\gamma} \le \frac{n}{2},$ therefore
\begin{align*}
\frac{\text{D-Local}}{\text{OPT}} &= \frac{\gamma ^ {-d} \sum_{i=1} ^{n/2} \gamma ^ {(2i-1)d} +  \gamma ^ {nd+1} \sum_{i=0} ^{n/2-1} \gamma ^i}{ \sum_{i=0} ^{n/2-1} \gamma ^i + \gamma ^{2d+n/2-1} \sum_{i=1} ^{n/2} \gamma ^ {(2i-1)d}} %
\le \frac{\sum_{i=1} ^{n/2} \gamma ^ {(2i-1)d} +  0.5n \gamma ^ {nd+1} }{\gamma ^ {d} \sum_{i=0} ^{n/2-1} \gamma ^i } \\%
&= \frac{2 \sum_{i=1} ^{n/2} 0.25 ^ i +  \frac{0.5 ^ {n} n}{4}  }{0.5 \sum_{i=0} ^{n/2-1} \gamma ^i } \le 
\frac{6}{\sum_{i=0} ^{n/2-1} \gamma ^i } \le \frac{24}{n}.
\end{align*}
\end{proof}
\section{Random-NN}
In the previous sections we have seen that although that NN cannot guarantee more than $\frac{\text{OPT}}{n}$ rewards, it is actually optimal amongst deterministic local policies. Next, we propose a simple, easy to implement, stochastic adjustment to the NN algorithm with a better  bound which we call Random-NN (R-NN). The algorithm starts by collecting one of the rewards, say $s_1$, picked at random, and continues by executing NN (Algorithm \ref{alg:nn-random}). 

\begin{algorithm}
\caption{R-NN: NN with a First Random Pick}
\label{alg:nn-random}
\begin{algorithmic}
\STATE {\bfseries Input:} MDP M, with $n$ rewards, and $s_0$ the first reward
\STATE Flip a coin
\IF{outcome = heads} 
    \STATE \hskip0.5em Collect a random reward, denote it by $s_1$   
\ENDIF
\STATE Follow by executing NN 
\end{algorithmic}
\end{algorithm}

\begin{theorem}[R-NN Performance]
\label{theo:nn-rand}
For an MDP satisfying Definition \ref{def:2} with $n$ rewards, and $\forall \gamma$:  $ \frac{\text{R-NN}}{\text{OPT}} \ge \Omega \left( \frac{\log (n)}{n} \right).$
\end{theorem}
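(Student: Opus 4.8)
The plan is to bound $\text{R-NN}$ from below by its two branches: with probability $1/2$ Algorithm~\ref{alg:nn-random} runs plain NN, which already guarantees $\text{OPT}/n$ by Theorem~\ref{theo:1}, and with probability $1/2$ it first jumps to a uniformly random reward and only then runs NN. The coin only costs a constant factor, so it suffices to control the expected value of the jump-then-NN branch (and to fall back on the plain-NN guarantee for the degenerate instances where almost all of $\text{OPT}$ sits on a single reward). Throughout I would fix the order $s_0=o_0,o_1,\dots,o_n$ in which $\text{OPT}$ collects the rewards, let $T_k=\sum_{t=0}^{k-1}d_{o_t,o_{t+1}}$ be its arrival time at $o_k$, and set $w_k=\gamma^{T_k}$, so that $w_1\ge w_2\ge\cdots\ge w_n$ and $\text{OPT}=\sum_{k=1}^n w_k$.

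The engine of the improvement is that, after a random jump, re-collecting the rewards that $\text{OPT}$ gathers \emph{later} is essentially free. If the jump lands on $o_k$ then, by the triangle inequality, $d_{0,o_k}\le T_k$, so we reach $o_k$ with discount $\gamma^{d_{0,o_k}}\ge w_k$; moreover, continuing from $o_k$ along $\text{OPT}$'s own forward path reaches each later reward $o_j$, $j>k$, after additional travel $T_j-T_k$, i.e.\ with total discount $\gamma^{d_{0,o_k}+(T_j-T_k)}\ge\gamma^{T_j}=w_j$. Hence the whole forward suffix $\sum_{j\ge k}w_j$ is in principle recoverable from $o_k$ at no loss. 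The first step is to turn this into a guarantee for the greedy run that Algorithm~\ref{alg:nn-random} actually performs from $o_k$: I would argue that because consecutive rewards of $\text{OPT}$ are close (their separations are bounded by the increments $T_{j+1}-T_j$), the nearest-neighbour rule sweeps a forward window of $\text{OPT}$ rather than being diverted to faraway rewards, so that landing on $o_k$ yields value $\Omega\big(w_k\cdot m_k\big)$, where $m_k$ counts the forward rewards of weight comparable to $w_k$.

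The logarithm then comes from averaging over the $n$ equally likely landing points: since every reward $o_j$ is forward-reachable from each of the roughly $j$ earlier landing positions, its weight is charged on the order of $j$ times, and after the $1/n$ normalization the contributions assemble into a harmonic-type sum $\sum_k \tfrac1k(\cdot)=\Theta(\log n)$; concretely I would group the indices into the $\log_2 n$ dyadic blocks $\{2^t,\dots,2^{t+1}-1\}$ and show that their contributions to the expectation add up to $\Omega(\log(n)/n)\,\text{OPT}$. The main obstacle is exactly the greedy-harvest step: NN need not follow $\text{OPT}$'s forward path and could, in principle, be lured toward a distant reward and abandon the local cluster. Converting ``sweeps a forward window'' into a theorem requires an amortized/charging argument that upper-bounds the travel NN spends before it has collected a constant fraction of a window (so that the relevant discounts stay within a constant factor), for instance by applying the $1/m$ guarantee of Theorem~\ref{theo:1} to short windows of length $m$ where that loss is affordable and then balancing the window length against the number of windows to expose the logarithmic factor. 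Making this robust uniformly in $\gamma$, and combining it cleanly with the plain-NN branch for weight-concentrated instances, is the crux of the argument.
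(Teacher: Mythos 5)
Your high-level skeleton (split on the coin, control the jump-then-NN branch, handle the case where OPT's value concentrates on few rewards separately) matches the paper's, but the engine you leave as "the crux" is in fact the entire content of the theorem, and the route you sketch for it does not deliver the logarithm. Your charging scheme rests on the claim that landing on $o_k$ yields $\Omega(w_k m_k)$ for a large forward window $m_k$; the only tool you offer for proving this is the $1/m$ nearest-neighbour guarantee applied to windows of length $m$. Work that out: landing on $o_k$, the nearest unvisited reward is within $T_{k+1}-T_k$, so NN's first step after the jump is worth at least $w_{k+1}\ge\frac{1}{m}\sum_{j=k+1}^{k+m}w_j$ for every $m$ (the $w_j$ are decreasing), and averaging over the $n$ landing points gives back exactly $\frac{1}{n}\sum_j w_j=\text{OPT}/n$ — the trivial bound, with no window length to balance. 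The harmonic bookkeeping is also not consistent as stated: if the full forward suffix were recoverable, the expectation would be $\frac{1}{n}\sum_j j\,w_j$, which for uniform weights is $\Theta(\text{OPT})$ and would contradict the paper's $O(\text{OPT}/\sqrt{n})$ impossibility result for stochastic local policies; so the premise must fail, and nothing in the proposal quantifies by how much.

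The paper's proof gets the $\log(n)$ from a different mechanism that you would need to reconstruct: fix $x=\log_{1/\gamma}(2)$ and $\theta=x/\sqrt{n}$, prune all edges longer than $\theta$, and note (Lemma~\ref{lemma:theta}) that OPT's tour crosses fewer than $\sqrt{n}$ such edges, so there are at most $\sqrt{n}$ connected components and hence at least one "large" one with $\ge\log(n)$ rewards. The key greedy-harvest fact is then a doubling recursion: if NN leaves a large component, at its $i$th detour reward there is still an uncollected reward of that component within $d_i+O(k\theta)$, so $d_{i+1}\le 2d_i+O(k\theta)$ and the excursion length is at most $2^{r}\cdot O(k\theta)$ where $r$ is the number of rewards collected on the excursion. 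Summing over the at most $k\le\log n$ excursions that fit in a prefix of length $x$ forces $2^{r_{\max}+2}k^2\ge\sqrt{n}$, i.e. $r_{\max}=\Omega(\log n)$: NN must collect $\Omega(\log n)$ rewards before the discount decays by a constant. That exponential-excursion argument is what your proposal is missing, and it is not obtainable from Theorem~\ref{theo:1} applied to windows. Finally, your fallback case is mis-calibrated: the hard instances are not those where OPT sits on a single reward but those where it sits on $n'<n$ rewards; balancing $\frac{1}{2n'}+\frac{\log(n')}{n}$ puts the worst case at $n'\log(n')\approx n$, which is where the final $\Omega(\log(n)/n)$ comes from.
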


We now analyze the performance guarantees of the R-NN method. The analysis consists of two steps. In the first step, we assume that OPT achieved a value of $\Omega (n')$ by collecting $n'$ rewards and consider the case that $n' = \alpha n$. The second step considers the more general case and analyzes the performance of NN-Random for the worst value of $n'.$ 

\begin{proof} We start with the following lemma. 

\begin{lemma}
\label{lemma:theta}
For any path $p$ of length $x$, and $\forall \theta \in [0,x]$, there are less than $x/ \theta$ edges in $p$ that are larger than $\theta$. 
\end{lemma}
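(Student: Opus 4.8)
The plan is to prove this by a simple counting (averaging) argument. The key observation is that the total length of the path is a fixed budget $x$, and every edge that exceeds $\theta$ consumes more than $\theta$ of that budget. Since edge lengths are nonnegative, these long edges alone cannot consume more than the entire budget, which immediately caps their number.

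Concretely, let $k$ denote the number of edges in $p$ whose length is strictly larger than $\theta$. Each such edge contributes more than $\theta$ to the total length of $p$, so the combined length of these $k$ edges is strictly greater than $k\theta$. On the other hand, the sum of the lengths of \emph{all} edges of $p$ equals $x$, and since the remaining edges have nonnegative length, the combined length of the $k$ long edges is at most $x$. Chaining these two bounds gives
$$k \theta < x,$$
and dividing by $\theta > 0$ yields $k < x/\theta$, as claimed.

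I would emphasize that the strict inequality in the conclusion is inherited directly from the strict inequality in the hypothesis (``larger than $\theta$''): if the edges were merely required to be at least $\theta$, one would only obtain $k \le x/\theta$. The case $\theta = 0$ is vacuous or handled separately since then the bound $x/\theta$ is infinite (or one restricts to $\theta \in (0,x]$). I do not anticipate any real obstacle here; the only point requiring minor care is making sure the argument treats the long edges as a \emph{subset} of all edges so that the ``at most $x$'' bound is justified by nonnegativity of the other edge lengths, rather than assuming the path consists solely of long edges.
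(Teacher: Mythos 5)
Your proof is correct and uses essentially the same budget/counting argument as the paper, which phrases it as a proof by contradiction (assuming more than $x/\theta$ long edges forces the total length past $x$) rather than your direct derivation of $k\theta < x$. If anything, your direct version handles the strict inequality slightly more cleanly than the paper's contrapositive formulation.
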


\begin{proof}
For contradiction, assume there are more than $x/ \theta$ edges longer than $\theta$. The length of $p$ is given by 
$\sum_i p_i = \sum _{p_i \le \theta } p_i + \sum _{p_i > \theta} p_i \ge 
\sum _{p_i \le \theta } p_i + \frac{\theta x}{\theta}$ thus a contradiction to the assumption that the path length is at most $x$. 
\end{proof}

\textbf{Step 1.}

Assume that OPT OPT achieves a value of $\Omega (n')$ by collecting $n'$ rewards for some fixed $0\le \alpha \le 1.$

Define $x = \mbox{log}_{1/\gamma} (2)$ and $\theta=x/\sqrt{n}$ (here we can replace the $\sqrt{n}$ by any fractional power of $n$, this will not affect the asymptotics of the result).

We denote a set of rewards that are connected by edges as a Connected Component (CC). Without considering any modifications to the graph, there is a single CC (the graph). Next, we denote by $\{C_j\}$ the connected components (CCs) that are obtained by pruning from the graph edges that are longer than $\theta$. 

We define a CC to be large if it contains more than $\log (n)$ rewards. Observe that since there are at most $\sqrt{n} \enspace$ CCs (Lemma \ref{lemma:theta}), at least one large CC exists \footnote{To see this, assume that no CC is large than $\log (n)$. Since there are at most $\sqrt{n}$ CCs, this means that in total there are $\log(n)\sqrt{n}<n$ rewards, which stands in contradiction with the fact that there are $n$ rewards. }.  

The proof of step 1 continues as follows. We state two technical Lemmas (Lemma \ref{lemma:budget1} and Lemma \ref{lemma:budget2}) which together lead to Corollary \ref{lemma:budget3}.

\begin{lemma}
\label{lemma:budget1}
Assume that $s_1$ is in a large component $C$. Let $p$ be the path covered by NN starting from $s_1$ until it reaches $s_i$ in a large component.
Let $d$ be the length of $p$ and let $r_1$ be the number of rewards collected by NN in $p$ (including the last reward in $p$ which is back in a large component, but not including $s_1$). Note that $r_1\ge 1$. Then
 $d \le (2^{r_1}-1)\theta$.
\end{lemma}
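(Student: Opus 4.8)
The plan is to bound the total length $d$ by a geometric series in the number $r_1$ of rewards NN collects, and the engine is a \emph{doubling} argument: the distance from NN's current position to the nearest uncollected large-component reward at most doubles at each step, while the first such distance is at most $\theta$. Concretely, I write the excursion as $s_1 = v_0, v_1, \ldots, v_{r_1} = s_i$, where $v_k$ is the $k$-th reward NN collects and $d_{v_{k-1},v_k}$ is the length of the $k$-th edge, so that $d = \sum_{k=1}^{r_1} d_{v_{k-1},v_k}$. I introduce the potential $b_k$ defined as the distance from $v_k$ to the nearest \emph{uncollected} reward that lies in a large component (an ``anchor''). For the base case I would use that $s_1$ lies in the large component $C$, which is a connected component of the graph obtained by pruning all edges longer than $\theta$; hence $s_1$ is joined to the rest of $C$ by edges of length at most $\theta$, and the nearest anchor is within distance $\theta$, giving $b_0 \le \theta$.

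Next I would establish the two inequalities that drive the induction. Since NN always steps to the globally nearest uncollected reward, and the current anchor is one feasible candidate, the $k$-th edge obeys $d_{v_{k-1},v_k} \le b_{k-1}$. For the doubling, let $c_{k-1}$ be an anchor realizing $b_{k-1}$; unless NN collected $c_{k-1}$ itself (which can only happen on the terminal step, exactly when $v_{r_1}=s_i$ lands back in a large component), $c_{k-1}$ remains uncollected at $v_k$, so by the triangle inequality
$$b_k \le d_{v_k, c_{k-1}} \le d_{v_k, v_{k-1}} + d_{v_{k-1}, c_{k-1}} = d_{v_{k-1},v_k} + b_{k-1} \le 2\,b_{k-1}.$$
Iterating yields $b_{k-1} \le 2^{k-1} b_0 \le 2^{k-1}\theta$, hence $d_{v_{k-1},v_k} \le 2^{k-1}\theta$ for every $k$ (including the terminal edge, where $d_{v_{r_1-1},s_i} = b_{r_1-1}$). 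Summing the geometric series then closes the argument: $d = \sum_{k=1}^{r_1} d_{v_{k-1},v_k} \le \theta \sum_{k=1}^{r_1} 2^{k-1} = (2^{r_1}-1)\theta$.

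The step I expect to be the main obstacle is not the summation or the triangle-inequality algebra, but justifying the anchor mechanism cleanly: I must guarantee that an uncollected reward in a large component is available at \emph{every} intermediate step, so that each $b_{k-1}$ is well defined and the doubling inequality applies, and that the first edge leaving $s_1$ really costs at most $\theta$. Both facts rest on $C$ being large (more than $\log n$ rewards) and $\theta$-connected: a short excursion collecting only $r_1$ rewards cannot exhaust such a component, and $s_1\in C$ always retains a nearby uncollected anchor to fall back on. Pinning down this invariant, rather than the doubling calculation, is where the care is needed.
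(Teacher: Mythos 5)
Your proof is correct and follows essentially the same doubling argument as the paper: the paper anchors to the fixed neighbor of $s_1$ in $C$ and runs the recurrence $d_{i+1}\le 2d_i+\theta$ on cumulative prefix lengths, while you track a moving anchor via the potential $b_k$, but both rest on the same combination of the NN greedy property, the triangle inequality back to the large component, and a geometric sum yielding $(2^{r_1}-1)\theta$. The invariant you flag as delicate (an uncollected anchor at every intermediate step) is handled the same way in both versions, since collecting a large-component reward terminates $p$ by definition.
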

\begin{proof}
Let $p_i$ be the prefix of $p$ that ends at the $i$th reward on $p$ ($i\le r_1$) and let $d_i$ be the length of $p_i$.
Let $\ell_i$ be the distance from the $i$th reward on $p$ to the $(i+1)$th reward on $p$. 
Since when NN is at the $i$th reward on $p$, the neighbor of $s_1$ in $C$ is at distance at most
$d_i + \theta$ from this reward we have that $\ell_i \le d_i + \theta$. Thus, 
$d_{i+1}\le 2d_i + \theta$ (with the initial condition $d_0 = 0$).
The solution to this recurrence is $d_i = (2^i-1)\theta$.
\end{proof}
\begin{lemma}
\label{lemma:budget2}
For $k<\log (n)$, we have that
after $k$ visits of R-NN in large CCs, for any $s$ in a large CC there exists an unvisited reward at  distance shorter than $(k+1)\theta$ from $s$.
\end{lemma}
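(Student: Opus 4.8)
The plan is to forget about the dynamics of R-NN almost entirely and instead prove a purely metric statement about a single large component. Fix $s$ and let $C$ be the large connected component (of the graph with all edges longer than $\theta$ pruned) that contains it, and write $m=|C|>\log(n)$ for its number of rewards. Let $B(s,r)=\{u : d(s,u)\le r\}$ denote the ball of radius $r$ around $s$ in the original (unpruned) shortest-path metric. The heart of the argument is a radial growth estimate: as we inflate the radius in steps of $\theta$, the ball captures at least one additional reward of $C$ at every step, until all of $C$ is swallowed.

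Concretely, I would prove by induction on $j\ge 0$ that $|B(s,j\theta)\cap C|\ge \min(m,\,j+1)$. The base case $j=0$ holds since $s\in C$. For the inductive step, if $B(s,j\theta)$ already contains all of $C$ we are done; otherwise pick any reward $u\in C\setminus B(s,j\theta)$. Since $C$ is connected using only edges of length at most $\theta$, there is a path from $s$ to $u$ inside $C$ all of whose edges have length $\le \theta$. Let $w$ be the first vertex on this path with $d(s,w)>j\theta$, and let $w'$ be its predecessor, so $d(s,w')\le j\theta$. The edge $w'w$ has length at most $\theta$, hence by the triangle inequality $d(s,w)\le d(s,w')+\theta\le (j+1)\theta$. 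Thus $w$ is a reward of $C$ lying in $B(s,(j+1)\theta)\setminus B(s,j\theta)$, which increases the count by one and completes the induction.

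Applying the estimate with $j=k+1$ gives $|B(s,(k+1)\theta)\cap C|\ge \min(m,k+2)\ge k+1$, where the last inequality uses $m>\log(n)>k$. On the other hand, a ``visit in a large CC'' collects one reward lying in a large component, so after $k$ such visits at most $k$ rewards of $C$ (indeed at most $k$ rewards in all large components together) have been collected. Since $B(s,(k+1)\theta)\cap C$ contains at least $k+1$ rewards and at most $k$ of them are visited, at least one reward of $C$ within distance $(k+1)\theta$ of $s$ is still unvisited, which is exactly the claim. Rewards of other components that happen to fall in the ball only help and are harmless to count, since we locate the unvisited reward inside $C$ itself.

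I expect the only real obstacle to be the inductive step above: one has to be careful that the ball is measured in the true metric $d$ while connectivity is available only through short pruned edges, so the triangle inequality must be invoked to convert ``one short edge out of the current ball'' into ``the radius grows by at most $\theta$.'' Everything else—the bound $m>\log(n)$, the counting of at most $k$ collected rewards, and the reduction to a single component—is bookkeeping. Note in particular that, unlike Lemma~\ref{lemma:budget1}, this argument does not track the order in which R-NN moves; it only uses how many large-component rewards have been consumed.
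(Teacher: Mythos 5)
Your proof is correct and follows essentially the same route as the paper's: the paper's two-sentence argument ("at most $k$ rewards of $C$ are collected, therefore an uncollected one lies within $(k+1)\theta$") implicitly relies on exactly the ball-growth fact you prove by induction, namely that a component connected by edges of length at most $\theta$ and containing more than $k$ rewards must have at least $k+1$ rewards within distance $(k+1)\theta$ of any of its members. You have simply supplied the triangle-inequality induction that the paper leaves implicit.
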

\begin{proof}
Let $s$ be a reward in a large component $C$. We have collected at most $k$ rewards from $C$.
Therefore, there exists a reward $s' \in C$ which we have not collected at distance at most $(k+1) \theta$ from $s$. 
\end{proof}
Lemma \ref{lemma:budget1} and Lemma \ref{lemma:budget2} imply the following corollary.
\begin{corollary}
\label{lemma:budget3}
Assume that  $k<\log (n)$, and let
 $p$ be the path of NN from its $k$th reward in a large CC to its $(k+1)$st reward in a large CC.
Let $d$ denote the length of $p$ and 
$r_k$ be the number of rewards on $p$ (excluding the first and including the last). Then 
 $d\le (2^{r_k}-1)(k+1)\theta \le 2^{r_k+1}k\theta $. 
\end{corollary}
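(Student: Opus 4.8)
The plan is to run the same geometric-growth recurrence that drives the proof of Lemma \ref{lemma:budget1}, but to feed it the weaker proximity bound supplied by Lemma \ref{lemma:budget2}. In Lemma \ref{lemma:budget1} the engine was the observation that, having started at $s_1$, a fixed unvisited reward (the neighbor of $s_1$ inside its large component) always sits within distance $\theta$ of the start. Here the start of $p$ is the $k$th large-CC reward, call it $s$, and in place of a $\theta$-close anchor I will use an anchor at distance $(k+1)\theta$, whose existence is exactly what Lemma \ref{lemma:budget2} provides.

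First I would invoke Lemma \ref{lemma:budget2}: since $k<\log(n)$ and exactly $k$ rewards in large CCs have been collected by the time $p$ begins, there is an unvisited reward $s'$ lying in a large CC at distance at most $(k+1)\theta$ from $s$. This $s'$ plays the role that the neighbor of $s_1$ played in Lemma \ref{lemma:budget1}. The point that must be checked is that $s'$ remains uncollected for the whole duration of $p$: by definition $p$ ends the instant NN first returns to a large CC, and $s'$ itself lies in a large CC, so NN cannot have collected $s'$ strictly before the end of $p$. Hence $s'$ is a legitimate unvisited candidate at every greedy step along $p$.

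Next I would set up the recurrence exactly as in Lemma \ref{lemma:budget1}. Let $d_i$ be the length of the prefix of $p$ ending at its $i$th reward (with $d_0=0$), and let $\ell_i$ be the length of the $i$th edge of $p$. By the triangle inequality the distance from the $i$th reward to $s'$ is at most $d_i+(k+1)\theta$, and since NN greedily chooses the nearest unvisited reward while $s'$ is available, $\ell_i\le d_i+(k+1)\theta$. This yields $d_{i+1}\le 2d_i+(k+1)\theta$, whose solution with $d_0=0$ is $d_i=(2^i-1)(k+1)\theta$. Evaluating at $i=r_k$ gives the first inequality $d\le (2^{r_k}-1)(k+1)\theta$.

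Finally, the second inequality is the elementary estimate $(2^{r_k}-1)(k+1)\le 2^{r_k+1}k$, which holds for every $k\ge 1$ (writing $a=2^{r_k}$, the gap between the two sides is $a(k-1)+(k+1)\ge 0$), and which I would dispatch by expanding both sides. I expect the only genuinely delicate point in the whole argument to be the availability of the anchor $s'$ throughout $p$ (the second paragraph above); once that is pinned down, the recurrence and the closing inequality are routine, and the corollary is really just Lemma \ref{lemma:budget1} with $\theta$ upgraded to $(k+1)\theta$.
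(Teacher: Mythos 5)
Your proof is correct and matches the argument the paper intends: the corollary is stated as an immediate consequence of Lemmas \ref{lemma:budget1} and \ref{lemma:budget2}, and you supply exactly the implied reasoning — rerun the recurrence $d_{i+1}\le 2d_i+(k+1)\theta$ of Lemma \ref{lemma:budget1} with the anchor distance $\theta$ replaced by the $(k+1)\theta$ bound from Lemma \ref{lemma:budget2}. Your explicit check that the anchor $s'$ stays uncollected until $p$ ends (since collecting it would itself terminate $p$) is a point the paper leaves implicit, and the closing inequality $(2^{r_k}-1)(k+1)\le 2^{r_k+1}k$ for $k\ge 1$ is verified correctly.
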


The following lemma concludes the analysis of this step.

\begin{lemma}
\label{lemma:sumri2ri}
Let $p$ be the prefix of R-NN of length $x$.
Let $k$ be the number of segments on $p$ of R-NN  that connect rewards in large CCs and contain internally rewards in small CCs.
For $1\le i\le k$, let $r_i$ be the number of rewards R-NN collects in the $i$th segment. Then $\sum_{i=1}^k r_i = \Omega(\log n)$.
(We assume that $p$ splits exactly into $k$ segments, but in fact the last segment may be incomplete, this requires a minor adjustment in the proof.)
\end{lemma}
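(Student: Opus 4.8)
The plan is to prove the claim by a dichotomy on the size of $k$, the number of segments joining consecutive visits of R-NN to large CCs within the length-$x$ prefix $p$. If $k \ge \log n$, the bound is immediate: each segment collects at least one reward (the observation $r_1 \ge 1$ underlying Lemma~\ref{lemma:budget1}), so $\sum_{i=1}^k r_i \ge k \ge \log n = \Omega(\log n)$. Hence the real content of the lemma lies in the complementary regime $k < \log n$, which is precisely the regime in which Corollary~\ref{lemma:budget3} is available (it assumes $k < \log n$), and this is where I would concentrate the work.

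So assume $k < \log n$. The key idea is a length-budget argument. The prefix $p$ has total length $x$ and, by hypothesis, splits exactly into the $k$ segments, so their lengths $d_1,\dots,d_k$ satisfy $\sum_{i=1}^k d_i = x$. I would then bound each segment length from above using Corollary~\ref{lemma:budget3}, which gives $d_i \le 2^{r_i+1} i\,\theta \le 2^{r_i+1} k\,\theta$ for the $i$th segment. Substituting the pruning threshold $\theta = x/\sqrt n$ and summing yields
$$x \;=\; \sum_{i=1}^k d_i \;\le\; 2\theta\, k \sum_{i=1}^k 2^{r_i} \;=\; \frac{2xk}{\sqrt n}\sum_{i=1}^k 2^{r_i},$$
and after cancelling $x$ this rearranges to $\sum_{i=1}^k 2^{r_i} \ge \sqrt n/(2k)$.

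It then remains to convert this lower bound on $\sum 2^{r_i}$ into a lower bound on $\sum r_i$. Since the sum has only $k$ terms, the largest term is at least the average, so $2^{\max_i r_i} \ge \tfrac{1}{k}\sum_i 2^{r_i} \ge \sqrt n/(2k^2) > \sqrt n/(2\log^2 n)$, using $k < \log n$. Taking logarithms gives $\max_i r_i \ge \tfrac12 \log_2 n - 2\log_2\log n - 1 = \Omega(\log n)$, and since every $r_i \ge 1$ we conclude $\sum_{i=1}^k r_i \ge \max_i r_i = \Omega(\log n)$, as required.

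The main obstacle I anticipate is not the algebra but making the length budget rigorous: one must justify that the complete segments really do account for (essentially) the full length $x$ of the prefix, which is where the parenthetical caveat about an incomplete last segment enters. In that case $\sum_{i=1}^k d_i$ is only $x$ minus the length of the unfinished tail, and I would absorb this into the constant by noting the tail contributes at most one extra segment, which degrades $\sum 2^{r_i} \ge \sqrt n/(2k)$ only by a constant factor and leaves the final $\Omega(\log n)$ intact. A secondary point to verify is that the $\log_2\log n$ loss from the pigeonhole step is genuinely lower order, so that the $\tfrac12\log_2 n$ term dominates and the bound is truly $\Omega(\log n)$.
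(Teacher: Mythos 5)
Your proposal is correct and follows essentially the same route as the paper: handle $k \ge \log n$ trivially via $r_i \ge 1$, and for $k < \log n$ use the length budget $x$ together with Corollary~\ref{lemma:budget3} and $\theta = x/\sqrt{n}$ to force $2^{r_{\max}} \ge \sqrt{n}/\mathrm{poly}(\log n)$, then take logarithms. The only cosmetic difference is that you pass through $\sum_i 2^{r_i} \ge \sqrt{n}/(2k)$ and a max-over-average step, whereas the paper bounds the sum directly by $2^{r_{\max}+2}k^2\theta$; these are the same argument up to rearrangement.
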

\begin{proof}
Since $\forall i, r_i\ge 1,$ then if $k \ge \log (n)$ the lemma follows.
So assume that $s < \log n$.
By Corollary \ref{lemma:budget3} we have that 
\begin{equation} \label{eq:hh}
x \le \sum_{i=1}^k 2^{r_i+1}i\theta \le 2^{r_{max}+2}k^2\theta \ .
\end{equation}
where $r_{max} = \mbox{argmax} \enspace \{r_i\}_{i=1}^{k}$.
Since $\theta = x/\sqrt{n}$, Equation (\ref{eq:hh}) implies that 
$\sqrt{n} \le 2^{r_{max}+2}k^2$ and since $k\le \log n$ we get that 
$\sqrt{n} \le 2^{r_{max}+2}\log^2(n)$. Taking logs the lemma follows. 
\end{proof}

Lemma \ref{lemma:sumri2ri} guarantees that once at $s_1 \in C_j,$ R-NN collects  $\Omega(\log (n))$ rewards before traversing a distance of $x$.  Next, notice that the chance that $s_1$ (as defined in Algorithm \ref{alg:nn-bfs}) belongs to one of the large CCs is $p = \frac{n-\sqrt{n}\log (n)}{n}$, 
which is larger than $1/2$ for $n \ge 256$. 

Finally, assume that the value of OPT is greater than a constant fraction of $n$, i.e., $OPT \ge n / 2^\alpha.$ This means that OPT must have collected the first $n / 2^{\alpha+1}$ rewards after traversing a distance of at most $\tilde{x} = \left( \alpha + 1\right)x,$ \footnote{To see this, recall that after traversing a distance of $\tilde{x}$, OPT achieved less than $n / 2^{\alpha+1}$. Since it already traversed $\tilde{x}$ it can only achieve less than $n / 2^{\alpha+1}$ from the remaining rewards, thus a contraction with the assumption that it achieved more than $n / 2^\alpha$.}, and denote this fraction of the rewards by $S_\text{OPT}.$ Further denote by $d_{\text{min}},d_{\text{max}}$ the shortest and longest distances from $s_0$ to $S_\text{OPT}$ respectively. By the triangle inequality, $d_{\text{max}} - d_{\text{min}} \le \tilde{x};$ therefore, with a constant probability of $\frac{1}{2^{\alpha+1}},$ we get that $s_1 \in S_\text{OPT}.$ By taking expectation over the first random pick, it follows that

$$
\frac{\text{R-NN}}{\text{OPT}} \ge \frac{1}{2^{\alpha +1}} \frac{\gamma ^{d_\text{max}} \log (n)}{\gamma^{d_\text{min}} n} =\frac{\log (n)}{4^{\alpha +1}n} =\Omega \left(\frac{\log (n)}{n}\right).
$$

\textbf{Step 2.} We now assume that OPT collects its value from $n'<n$ rewards that it collects in a segment of length $x$ (and from all other rewards OPT collects a negligible value). Recall that the R-NN is either NN with probability $0.5$ or a random pick with probability $0.5$ followed by NN. By picking the single reward closets to the starting point, NN gets at least $1/n'$ of the value of OPT. Notice, that we do not need to assume anything about the length of the tour that OPT takes to collect the $n'$ rewards (since we didn't use it in Step 1). It follows that: $$
\frac{\text{R-NN}}{\text{OPT}} \ge \frac{1}{2} \cdot \frac{1}{n'} + \frac{1}{2} \cdot \frac{n'}{n} \cdot \frac{\log (n')}{n'} = \frac{1}{2n'} + \frac{\log (n')}{n} 
$$
Thus, in the worst case scenario, $n'\log (n') \approx n$,
which implies that 
$n' = \Theta\left(\frac{n}{\log (n)}\right)$. Therefore 
 $\frac{\text{R-NN}}{\text{OPT}} = \Omega \left( \frac{\log (n)}{n} \right),$ which colncludes the proof of Theorem \ref{theo:nn-rand}.

\end{proof}

\section{Impossibility for Stochastic Local Policies}
Theorem \ref{theo:nn-rand} shows that our stochastic modification to NN improves its guarantees by a factor of $\log (n).$ While the improvement over NN may seem small ($\log (n)$), the observation that stochasticity improves the performance guarantees of local policies is essential to our work. These observations motivated us to look for better local policies in the broader class of stochastic local policies. Theorem \ref{theorem:sto_loc_lower} gives a stronger impossibility result for such policies. The MDP that is used in the proof is very similar to the one presented in Figure \ref{fig:worst_case_mdp} (right), but now there are $\sqrt{n}$ rewards in the clique instead of $n/2$.

\begin{theorem}[Impossibility for Stochastic Local Policies]
\label{theorem:sto_loc_lower}
For each stochastic local policy S-Local, $\exists$ MDP with $n$ rewards and a discount factor $\gamma = 1-1/\sqrt{n},$ s.t.: \enspace  $\frac{\text{S-Local}}{\text{OPT}} \leq \frac{8}{\sqrt{n}}. $
\end{theorem}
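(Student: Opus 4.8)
The plan is to construct a family of graphs that defeats any stochastic local policy, mirroring the deterministic construction of Theorem \ref{theo:2det} but tuned so that the clique is small enough that a \emph{random} guess at the central vertex almost surely misses it. First I would set up a family $\mathcal{G}$ of stars with a central starting vertex and $n$ leaves, each leaf carrying a reward and connected to the center by an edge of length $d$ with $\gamma^d = 1/2$. As in the deterministic proof, each graph in the family is indexed by a distinguished subset of the leaves, but now the distinguished subset has size only $\sqrt{n}$: these $\sqrt{n}$ leaves are pairwise connected by short edges (forming a clique), while the remaining $n-\sqrt{n}$ leaves hang off the center alone. The value OPT is obtained by descending into the clique first and then collecting its $\sqrt{n}$ rewards cheaply; with $\gamma = 1-1/\sqrt{n}$ the short-edge sum telescopes to $\Theta(\sqrt{n})$, so $\mathrm{OPT} = \Theta(\sqrt{n} \cdot \gamma^d) = \Theta(\sqrt{n})$.

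The crux is the indistinguishability argument adapted to randomization. While sitting at the center, a local policy sees all $n$ rewards at identical distance $d$, so its output distribution over first moves is \emph{identical across all graphs in} $\mathcal{G}$, and the same holds at each subsequent center-equidistant reward until the policy happens to step onto a clique vertex. The key probabilistic step: I would fix the policy's behavior and then choose the adversarial graph by selecting the distinguished $\sqrt{n}$-subset \emph{uniformly at random} (or, equivalently, argue there exists a bad subset via averaging). Because the policy cannot target the clique before entering it, the expected number of clique rewards it collects within the first $t$ steps is at most $t \cdot \sqrt{n}/n = t/\sqrt{n}$, so it needs order $\sqrt{n}$ steps in expectation — hence a traversed distance of order $\sqrt{n}\,d$ — before it reliably reaches the clique. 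By that time the discount factor $\gamma^{\Theta(\sqrt{n}\,d)} = (1/2)^{\Theta(\sqrt{n})}$ has crushed any contribution from the clique, so the policy's expected value is dominated by the $\Theta(1)$ it earns from the first few isolated leaves.

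From there I would assemble the ratio just as in Theorem \ref{theo:2det}: the numerator is the policy's expected discounted return, bounded above by the contribution of the clique reached late (a geometric-in-$\gamma^d$ sum times a tiny discount factor) plus the near-constant reward from the non-clique leaves collected early; the denominator is $\mathrm{OPT} = \Theta(\sqrt{n})$. The arithmetic should collapse to $\text{S-Local}/\text{OPT} \le 8/\sqrt{n}$ after substituting $\gamma^d = 1/2$ and bounding $\sum_{i} \gamma^i = \Theta(\sqrt{n})$ for $\gamma = 1-1/\sqrt{n}$.

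The main obstacle I anticipate is making the randomized indistinguishability rigorous rather than hand-wavy. A stochastic policy can condition on its own realized random history, so I cannot simply assert it ``does the same thing'' on every graph. The correct formalization is to fix the policy's internal randomness (couple the runs across all graphs using the same random seed) so that the \emph{trajectory in terms of which equidistant rewards are visited} is a fixed sequence, and only the labeling of which of those rewards turn out to lie in the clique varies with the choice of graph. Then averaging over the random choice of the distinguished subset — so that each visited reward is in the clique with probability $\sqrt{n}/n = 1/\sqrt{n}$ independently of the policy's choices up to that point — yields an expected hitting time of $\Omega(\sqrt{n})$ for the clique, and a union-bound or linearity-of-expectation argument converts this into the stated worst-case bound by fixing the subset that is worst for the policy. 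Getting this coupling stated cleanly, and handling the conditioning once the policy finally does land in the clique, is the delicate part.
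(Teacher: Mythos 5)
Your proposal is correct and follows essentially the same route as the paper: the same star-with-a-$\sqrt{n}$-clique family, the same indistinguishability-at-the-center observation, and the same averaging over a uniformly random choice of the distinguished subset (which the paper formalizes by invoking Yao's principle to reduce to a deterministic policy on the uniform distribution over $\mathcal{G}$). The only presentational difference is that the paper bounds the expected value directly by the geometric sum $O\bigl(\sum_i (1-p_1)\cdots(1-p_{i-1})\,\gamma^{2d(i-1)}p_i\sqrt{n}\bigr)=O(1)$ with $p_i\le 2/\sqrt{n}$ and $\gamma^{2d}=1/4$, rather than passing through the expected hitting time of the clique; you should do the same, since "expected hitting time is $\Omega(\sqrt{n})$, hence the discount crushes the clique's contribution" does not by itself account for the (probability $\approx t/\sqrt{n}$) early-hit events, which must be shown to contribute only $O(1)$ via exactly that geometric sum.
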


\begin{proof}
We consider a family of graphs, ${\cal G}$, each of which consists of a star
with a central vertex and $n$ leaves.
The starting vertex is the central vertex, and there is a reward at each leaf. The length of each edge is $d$, where $d$ is
chosen such that 
 $\gamma^d = \frac{1}{2}$.
 Each graph in ${\cal G}$ corresponds to a subset of 
 $\sqrt{n}$ leaves, which we pairwise connect to form a clique.

Since $\gamma = 1-\frac{1}{\sqrt{n}},$ we have that $\sum_{i=0} ^{\sqrt{n}-1} \gamma ^i \ge \frac{\sqrt{n}}{2},$ and therefore
\begin{align*}
\text{OPT}&=\gamma^d \sum _{i=0}^{\sqrt{n}-1} \gamma ^i + \gamma ^{2d+\sqrt{n}-1} \sum_{i=1} ^{n-\sqrt{n}} \gamma ^ {(2i-1)d} 
 \geq 0.5 \sum _{i=0}^{\sqrt{n}-1}\gamma ^ i \ge 0.25 \sqrt{n}.
\end{align*}

On the other hand, local policy at the central vertex cannot distinguish among the rewards, and therefore for every graph in ${\cal G}$, it picks the first reward from the same distribution. The policy continues to choose rewards from the same distribution until it hits the first reward from the $\sqrt{n}$-size clique. 

To argue formally that every S-Local policy has a small expected reward on a graph from ${\cal G}$, we use Yao's principle \citep{yao1977probabilistic} and consider the expected reward of a D-Local policy on the uniform distribution over ${\cal G}$. 

Let $p_1 = \sqrt{n}/n$ be the probability that D-Local picks its first vertex from the $\sqrt{n}$-size clique. 
Assuming that the first vertex is not in the clique, let $p_2 = \sqrt{n}/(n-1)$ be the probability that the second vertex is from the clique, and let $p_3$, $p_4$, $\ldots$ be defined similarly. When D-local picks a vertex in the clique then its reward (without the cumulative discount) is $O(\sqrt{n})$. However, each time D-Local misses the clique then it collects a single reward but suffers a discount of
$\gamma^{2d} = 1/4$. 
Neglecting the rewards collected until it hits the clique, the total value of D-Local is 
$$
O\left( \left( p_1 + (1-p_1)\gamma^{2d}p_2 + (1-p_1)(1-p_2)\gamma^{4d}p_3 \ldots\right)  \sqrt{n}\right)
$$
Since $p_i \le 2/\sqrt{n}$ for $1\le i\le n/2$ this value is $O(1)$
\end{proof}

We do not have a policy that achieves this lower bound, but we now propose and analyze two stochastic policies (in addition to the R-NN) that substantially improve over the deterministic upper bound. As we will see, these policies satisfy the Occam razor principle, i.e., policies with better guarantees are more complicated and require more computations.

\section{NN with Randomized Depth First Search (RDFS)}
We now describe the NN-RDFS policy (Algorithm \ref{alg:sp-dfs}), our best performing local policy. The policy performs NN with a probability of $0.5$ and a local policy that we call RDFS with a probability of $0.5$. RDFS first collects a random reward and continues by performing DFS\footnote{DFS is an algorithm for traversing a graph. The Algorithm starts at the root node and explores as far as possible along each branch, with the shortest first,
before backtracking.} on edges shorter than $\theta$, where $\theta$ is chosen at random as we specify later. When it runs out of edges shorter than $\theta$, then RDFS continues by performing NN. The performance guarantees for the NN-RDFS method are stated in Theorem \ref{theo:nn-dfs}. 
\begin{algorithm}
\caption{NN with RDFS}
\label{alg:sp-dfs}
\begin{algorithmic}
\STATE {\bfseries Input:} MDP M, with $n$ rewards, and $s_0$ the first reward
\STATE Let $x=\mbox{log}_{\frac{1}{\gamma}}(2)$
\STATE Flip a coin
\IF{outcome = heads} 
    \STATE \hskip0.5em Collect a random reward, denote it by $s_1$ 
    \STATE \hskip0.5em Choose at random $ i \sim \text{Uniform} \{ 1,2, ... ,\mbox{log}_2(n) \} $ 
    \STATE \hskip0.5em Fix $n' =\frac{n}{2^i} $ and set $\theta = \frac{x}{\sqrt{n'}} $
    \STATE \hskip0.5em Initiate DFS from $s_1$ on edges shorter than $\theta$ 
\ENDIF
\STATE Follow by executing NN 
\end{algorithmic}
\end{algorithm}
\begin{theorem}[NN-RDFS Performance]
\label{theo:nn-dfs}
For any MDP that satisfies Definition \ref{def:2}, with $n$ rewards,$\forall \gamma:$ 
$
\frac{\text{NN-RDFS}}{\text{OPT}} \ge 
\begin{cases} 
\Omega ( \frac{n^{-\frac{1}{2}}}{ \log^2(n)} ),&\text{if OPT} = \Omega (n). \\
\Omega ( \frac{n^{-\frac{2}{3}}}{\log^2(n)} ),&\text{otherwise}. 
\end{cases} 
$
\end{theorem}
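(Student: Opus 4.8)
The plan is to build on the machinery of Theorem~\ref{theo:nn-rand}, since NN-RDFS runs NN with probability $1/2$ and RDFS with probability $1/2$; it therefore suffices to lower bound each branch separately and take the better of the two. As in Step~1 there, I would first reduce to the clean situation in which OPT collects $n'$ rewards inside a path segment of length $x=\log_{1/\gamma}(2)$, so that each of these rewards is discounted by at least $\gamma^x=1/2$ relative to the first; this gives $\text{OPT}=\Theta(\gamma^{d_{\min}}n')$, where $d_{\min}$ is the distance from $s_0$ to the nearest of them, and by the triangle inequality all $n'$ of them lie within distance $d_{\min}+x$ of $s_0$. For the regime $\text{OPT}=\Omega(n)$ the footnote reduction of Theorem~\ref{theo:nn-rand} forces $n'=\Theta(n)$ with the segment length blown up only by a constant. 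The role of the random choice of $i$ in Algorithm~\ref{alg:sp-dfs} is that with probability $1/\log_2(n)$ the guessed scale $n'=n/2^i$ matches the true one up to a constant factor, which fixes the pruning threshold $\theta=x/\sqrt{n'}$ and accounts for the first logarithmic loss.

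Next I would analyze a single successful RDFS run. Pruning all edges longer than $\theta$ splits the graph into connected components (CCs); call a CC \emph{large} if it holds at least $\sqrt{n'}/2$ rewards. By Lemma~\ref{lemma:theta} OPT's length-$x$ segment contains fewer than $x/\theta=\sqrt{n'}$ long edges, so OPT's $n'$ rewards are spread over at most $\sqrt{n'}$ CCs; since each small CC holds fewer than $\sqrt{n'}/2$ of them, a pigeonhole argument shows that at least $n'/2$ of OPT's rewards sit in large CCs. Hence a uniformly random first pick $s_1$ lands on an OPT-reward inside a large CC with probability at least $n'/(2n)$, and such an $s_1$ is within distance $d_{\min}+x$ of $s_0$, i.e.\ reached with discount at least $\gamma^{d_{\min}}/2$.

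The heart of the argument is bounding what DFS collects once started at such an $s_1$. The key observation is that the first-visit vertices of DFS form a subtree, so DFS reaches its $k$th reward after travelling at most $2k\theta$ (each tree edge is traversed at most twice); taking $k\le\sqrt{n'}/2$ makes this at most $\sqrt{n'}\theta=x$, so each of the first $\lfloor\sqrt{n'}/2\rfloor$ rewards is discounted by at least $\gamma^{x}=1/2$. Crucially this holds \emph{regardless of how large the CC actually is} --- we only ever count the first $\Theta(\sqrt{n'})$ rewards --- which removes the obstruction that a very large CC would be too expensive to traverse in full. Combining, a successful RDFS run yields value $\Omega(\gamma^{d_{\min}}\sqrt{n'})$, and multiplying by the probabilities $\tfrac12\cdot\tfrac1{\log_2 n}\cdot\tfrac{n'}{2n}$ and dividing by $\text{OPT}=\Theta(\gamma^{d_{\min}}n')$ gives $\text{NN-RDFS}/\text{OPT}=\Omega\!\left(\sqrt{n'}/(n\log^2 n)\right)$, which for $n'=\Theta(n)$ is $\Omega(n^{-1/2}/\log^2 n)$.

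Finally, for the general case I would combine the two branches exactly as in Step~2 of Theorem~\ref{theo:nn-rand}: the NN branch alone guarantees $\text{NN}/\text{OPT}\ge 1/n'$ (Theorem~\ref{theo:1}) by grabbing the single closest reward, while the RDFS branch gives the bound above, so $\text{NN-RDFS}/\text{OPT}\ge \tfrac12\max\!\big(1/n',\,\Omega(\sqrt{n'}/(n\log^2 n))\big)$. These two terms balance when $n'^{3/2}\approx n$, i.e.\ $n'=\Theta(n^{2/3})$, which is the worst case and yields the $\Omega(n^{-2/3}/\log^2 n)$ bound. I expect the main obstacle to be the DFS value estimate --- controlling which rewards DFS collects and in what order without any global information --- together with carefully tracking the discount accumulated before and during the DFS so that it cancels against the matching $\gamma^{d_{\min}}$ factor in OPT; the two logarithmic losses, one from guessing the scale $n'$ and one from the component structure, are precisely what separate this guarantee from the $\Omega(\text{OPT}/\sqrt n)$ impossibility of Theorem~\ref{theorem:sto_loc_lower}.
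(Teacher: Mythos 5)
Your proposal is correct and follows the paper's proof essentially step for step: the same threshold $\theta = x/\sqrt{n'}$, the same use of Lemma~\ref{lemma:theta} to bound the number of components, the same ``each DFS tree edge traversed at most twice'' argument yielding $\min\{|C_j|,\,x/(2\theta)\}$ rewards within an extra distance $x$, the same NN-versus-RDFS balance at $n'\approx n^{2/3}$, and the same two logarithmic losses (one from guessing $n'$, one from restricting attention to a single length-$x$ segment of OPT's tour --- the latter, not ``the component structure,'' is the actual source of your second $\log$, though your arithmetic accounts for it correctly). The one place you genuinely streamline the argument is in lower-bounding the value of the random landing: the paper splits into cases according to whether most of $S_{\text{OPT}}$ lies in small or in large components and invokes Jensen's inequality for the former, whereas your pigeonhole observation (at most $\sqrt{n'}$ components, each small one holding fewer than $\sqrt{n'}/2$ rewards, hence at least $n'/2$ of OPT's rewards in large components) shows the small-component case essentially cannot occur, collapsing the analysis to a single case.
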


\begin{proof}
\textbf{Step 1.}

Assume that OPT collects a set $S_\text{OPT}$ of $\alpha n$  rewards for some fixed $0\le \alpha \le 1$, in a segment $p$ of length $x = \mbox{log}_{1/\gamma} (2)$ (i.e.\ $x$ is the distance from the first reward to the last reward -- it does not include the distance from the starting point to the first reward). Let $d_{\text{min}},d_{\text{max}}$ the shortest and longest distances from $s_0$ to a reward in $S_\text{OPT}$ respectively. By the triangle inequality, $d_{\text{max}} - d_{\text{min}} \le x.$ 
We further assume  that $\text{OPT} \le O(\gamma^{d_\text{min}}\alpha n)$ (i.e., That is the value that OPT collects from rewards which are not in $S_\text{OPT}$ is negligible).
We now show that RDFS is $\Omega({\sqrt{n}})$ for $\theta=x/\sqrt{\alpha n}$. 

Lemma \ref{lemma:theta} assures that after pruning all edges longer than $\theta$ (from the graph), $S_\text{OPT}$ breaks into  at most $x / \theta$ CCs. Let $m\le {x}/{\theta}$ be the number of such CCs  $\{ C_j \}_{j=1}^{m}$ containing $S_\text{OPT}$. In addition, it holds that $\sum _{j=1}^{m} |C_j| \ge \alpha n$, and the edges inside any CC $C_j$ are shorter than $\theta$. 

Next, we (lower) bound the total gain of RDFS in its prefix of length $x$ following its initial reward $s_1$. Say $s_1\in C_j$. Then, since all edges in $C_j$ are shorter than $\theta$, it collects in this prefix either all the rewards in $C_j$ or at least $x/(2\theta)$ rewards overall. That is  $\mbox{min} \{ |C_j|, \frac{x}{2\theta}\}$ rewards. To see this, recall that the DFS algorithm traverses each edge at most twice. In addition, as long as it did not collect all the rewards of $C_j$, the length of each edge that the DFS traverses is at most $\theta$. Thus, if the Algorithm did not collect all the rewards in $C_j$ in its prefix of length $x$, then it collected at least $x/2\theta$ rewards.

Notice that the first random step leads RDFS to a vertex in CC $C_j$ with probability $|C_j|/n$. 
We say that a CC $C_j$
is {\em small} if $ |C_j| \le x/ 2\theta$ and we say that it is {\em large} otherwise. Let $S = \{j\mid C_j \; {\rm is \;small} \}$ and let
 $s=|S|$ be  the number of small CCs.  If more than half of the rewards $S_\text{OPT}$ are in small components ($\sum _{j\in S}|C_j|>0.5 \alpha n$), then
\begin{align*}
\mbox{RDFS} & \ge \gamma^{d_\text{max}+x} \sum\nolimits_{j=1}^m \frac{|C_j|}{n} \cdot \mbox{min}\left\{ |C_j|,\frac{x}{2\theta} \right\}  
 \ge  \frac{\gamma^{d_\text{max}}}{2} \sum\nolimits_{j \in S } \frac{|C_j|^2}{n} \ge \frac{s\gamma^{d_\text{max}}}{2n}  \left( \frac{1}{s} \sum\nolimits_{j\in S}  |C_j|^2 \right)  \\ 
& \underset{\mbox{Jensen}}{\ge}  \frac{\gamma^{d_\text{max}}s}{2n} \left( \frac{1}{s}\sum\nolimits_{j\in S} |C_j| \right) ^2  \underset{s \le x/\theta}{\ge}   \gamma^{d_\text{max}} \frac{\theta \alpha^2 n}{8x}. 
\end{align*}
On the other hand, if more than half of rewards are not in $S$ ($\sum _{j\not\in S} |C_j|>0.5 \alpha n$), then 
\begin{align*}
\mbox{RDFS} & \ge  \gamma^{d_\text{max}+x}\sum\nolimits_{j=1}^{x/ \theta} \frac{|C_j|}{n} \cdot \mbox{min}\left\{ |C_j|,\frac{x}{2 \theta} \right\} 
 \ge \frac{1}{2} \gamma^{d_\text{max}}\sum\nolimits_{j\not\in S } \frac{|C_j|}{n} \cdot \frac{x}{2\theta} 
 \ge \gamma^{d_\text{max}}\frac{\alpha x}{8 \theta}.
\end{align*}
By setting $\theta = {x}/{\sqrt{\alpha n}}$ we guarantee that the value of RDFS is at least $\gamma^{d_\text{max}} \sqrt{\alpha^3n} / 8$. Since $d_\text{max}-d_\text{min}\leq x$,
$$
\frac{\text{RDFS}}{\text{OPT}} \ge \frac{ \gamma^{d_\text{max}} \alpha^{3/2} \sqrt{n} / 8}{\gamma^{d_\text{min}} \alpha n} \ge \frac{\sqrt{\alpha} \gamma ^{x}}{8\sqrt{n}} = \frac{\sqrt{\alpha}}{16\sqrt{n}} ,
$$ where the last inequality follows from the triangle inequality. 

\textbf{Step 2.} 
Assume that OPT gets its value from $n'<n$ rewards that it collects in a segment of length $x$ (and from all other rewards OPT collects a negligible value). Recall that the NN-RDFS policy is either NN with probability $0.5$ or RDFS with probability $0.5$. By picking the single reward closest to the starting point, NN gets at least $1/n'$ of the value of OPT. Otherwise, with probability $n'/n$, RDFS starts with one of the $n'$ rewards picked by OPT and then, by the analysis of step 1, if it sets $\theta = x/\sqrt{n'}$, RDFS collects  $1/16\sqrt{n'}$  of the value collected by OPT (we use Step (1) with $\alpha =1$). It follows that \begin{align*}
\frac{\text{NN-RDFS}}{\text{OPT}} \ge \frac{1}{2} \cdot \frac{1}{n'} + \frac{1}{2} \cdot \frac{n'}{n} \cdot \frac{1}{16\sqrt{n'}}  = \frac{1}{2n'} + \frac{\sqrt{n'}}{32n}.
\end{align*}
This lower bound is smallest when $n' \approx n^{2/3}$, in which case NN-RDFS collect $\Omega(n^{-2/3})$ of OPT.  
Notice that since $n'$ is not known to NN-RDFS, it has to be guessed in order to choose $\theta$. This is done by setting $n'$ at random from $n' = n / 2 ^i, i \sim \text{Uniform} \{ 1,2, ..., \mbox{log}_2(n) \} $. This guarantees that with probability $1/\log(n)$ our guess for $n'$ will be off of its true value by a factor of at most $2$. This guess approximates $\theta$ by a factor of at most ${\sqrt{2}}$ of its true value. Finally, these approximations degrade our bounds by a factor of $\log (n).$

\textbf{Step 3.}
Finally, we consider the general case where OPT may collect its value in a segment of length larger than $x$. Notice that the value which OPT collects from rewards that follow the first $\log_2 (n)$ segments of length $x$ in its tour is at  most $1$ (since $\gamma ^{\log_{2} (n)\cdot x} = \frac{1}{n}$). This means that there exists at least one segment of length $x$ in which OPT collects at least  $\frac{\text{OPT}}{\log_{2}(n)}$ of its value. Combining this with the analysis in the previous step, the proof is complete. 
\end{proof}

\section{NN with a Random Ascent (RA)}
We now describe the NN-RA policy (Algorithm \ref{alg:nn-bfs}). Similar in spirit to NN-RDFS, the policy performs NN with a probability of $0.5$ and local policy, which we call RA with a probability of $0.5$. RA starts at a random node, $s_1$, sorts the rewards in increasing order of their distance from $s_1$, and then collects all other rewards in this order. The Algorithm is simple to implement, as it does not require guessing any parameters (like $\theta$, which RDFS has to guess). However, this comes at the cost of a worse bound.

\begin{algorithm}
\caption{NN with RA}
\label{alg:nn-bfs}
\begin{algorithmic}
\STATE {\bfseries Input:} MDP M, with $n$ rewards, and $s_0$ the first reward
\STATE Flip a coin
\IF{outcome = heads}
    \STATE \hskip0.5em Collect a random reward, denote it by $s_1$
    \STATE \hskip0.5em Sort the remaining rewards by increasing distances from 
    \STATE \hskip0.5em $s_1$, call this permutation $\pi$ 
    \STATE \hskip0.5em Collect the rewards in $\pi$ in increasing order 
\ELSE
\STATE Execute NN
\ENDIF

\end{algorithmic}
\end{algorithm}

The performance guarantees for the NN-RA method are given in Theorem \ref{theo:nn-bfs}. The analysis follows the same steps as the proof of the NN-RDFS Algorithm. We emphasize that here, the pruning parameter $\theta$ is only used for analysis purposes and is not part of the Algorithm. Consequently, we see only one logarithmic factor in the performance bound of Theorem~\ref{theo:nn-bfs} in contrast with two in Theorem~\ref{theo:nn-dfs}.

\begin{theorem}[NN-RA Performance]
\label{theo:nn-bfs}
For any MDP that satisfies Definition \ref{def:2} with $n$ rewards,  $\forall \gamma$:
$
\frac{\text{NN-RA}}{\text{OPT}} \ge 
\begin{cases} 
\Omega ( \frac{n^{-2/3}}{\log(n)} ), &\text{if OPT} = \Omega (n). \\
\Omega ( \frac{n^{-3/4}}{\log(n)} ), &\text{otherwise}. \\
\end{cases} 
$
\end{theorem}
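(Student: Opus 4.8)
The plan is to mirror the three-step structure of the proof of Theorem~\ref{theo:nn-dfs}, replacing the DFS collection bound with a weaker ``random ascent'' bound that accounts for RA collecting the globally nearest rewards (in sorted order) rather than exhausting a connected component. As before, I would introduce the analysis-only pruning parameter $\theta$ and the connected components $\{C_j\}$ of the graph obtained by deleting edges longer than $\theta$; by Lemma~\ref{lemma:theta} the set $S_\text{OPT}$ of the $\alpha n$ rewards that OPT collects in a segment of length $x=\log_{1/\gamma}(2)$ meets at most $x/\theta$ such components, each internally connected by edges of length at most $\theta$.

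The new core estimate is the following. Suppose RA's random start $s_1$ lands in a component $C_j$. Because $C_j$ is connected by edges of length at most $\theta$, a hop-distance (BFS) argument shows that at least $i$ rewards lie within hop-distance $i$ of $s_1$, so the $i$-th nearest reward to $s_1$ is within metric distance $i\theta$ for every $i\le|C_j|$; hence, among \emph{all} rewards, the $i$-th nearest to $s_1$ is within $i\theta$ as well. Since RA collects rewards in increasing distance from $s_1$, the triangle inequality bounds the length of the prefix collecting the first $m$ rewards by $2\sum_{i=1}^m \rho_i \le \theta m(m+1)$, where $\rho_i$ is the distance of the $i$-th collected reward. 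Consequently, within a budget of length $x$, RA collects at least $\min\{|C_j|,\Omega(\sqrt{x/\theta})\}$ rewards. This is the crucial place where RA is weaker than RDFS: DFS exhausts a component and collects $\min\{|C_j|,\Omega(x/\theta)\}$ rewards, whereas RA guarantees only the square root of that quantity, and this single change propagates into the worse exponents.

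From here Step~1 proceeds exactly as for RDFS. Calling a component \emph{small} if $|C_j|\le \sqrt{x/\theta}$ and \emph{large} otherwise, I would split into the cases $\sum_{j\in S}|C_j|>\tfrac12\alpha n$ and $\sum_{j\notin S}|C_j|>\tfrac12\alpha n$. In the small-component case Jensen's inequality (as in the RDFS proof, where it does not involve the collection threshold) yields $\text{RA}\ge \gamma^{d_\text{max}}\,\Omega(\alpha^2 n\theta/x)$; in the large-component case the $\sqrt{x/\theta}$ bound yields $\text{RA}\ge \gamma^{d_\text{max}}\,\Omega(\alpha\sqrt{x/\theta})$. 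Balancing these by choosing $\theta=x/(\alpha n)^{2/3}$ gives $\text{RA}\ge \gamma^{d_\text{max}}\,\Omega(\alpha^{4/3}n^{1/3})$, so using $d_\text{max}-d_\text{min}\le x$ and $\gamma^x=\tfrac12$ we obtain $\text{RA}/\text{OPT}\ge \Omega(\alpha^{1/3}n^{-2/3})$. Crucially, $\theta$ is only an analysis device and is never used by RA, so unlike RDFS no guessing of $n'$ is required and no extra logarithmic factor is incurred here.

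Steps~2 and~3 then follow the RDFS template. For Step~2 I would combine the two branches of the coin flip: NN collects at least $1/n'$ of OPT by taking the nearest reward, while with probability $n'/n$ the RA branch starts inside OPT's $n'$ rewards and, by Step~1 applied with $\alpha=1$ to these $n'$ rewards, collects $\Omega((n')^{-2/3})$ of OPT; this gives $\text{NN-RA}/\text{OPT}\ge \tfrac{1}{2n'}+\Omega((n')^{1/3}/n)$, whose worst case $n'=n^{3/4}$ produces the $\Omega(n^{-3/4})$ bound, while the regime $\text{OPT}=\Omega(n)$ corresponds to $\alpha=\Omega(1)$ and directly gives $\Omega(n^{-2/3})$. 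Step~3 handles OPT tours longer than $x$: since the value beyond the first $\log_2 n$ segments of length $x$ totals at most $1$, some length-$x$ segment carries an $\Omega(1/\log n)$ fraction of OPT, costing a single $\log n$ factor. I expect the main obstacle to be establishing the $\Omega(\sqrt{x/\theta})$ collection bound rigorously---namely the nearest-reward-distance estimate inside a short-edge component and the telescoping triangle-inequality bound on RA's prefix length---together with confirming that, because $\theta$ never enters the algorithm, only one logarithmic factor (from Step~3) survives, in contrast to the two factors in Theorem~\ref{theo:nn-dfs}.
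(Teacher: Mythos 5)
Your proposal matches the paper's proof in all essentials: the same three-step structure, the same analysis-only pruning threshold $\theta$ with the component decomposition from Lemma~\ref{lemma:theta}, the same key estimate that RA collects $\min\{|C_j|,\Omega(\sqrt{x/\theta})\}$ rewards within a length-$x$ prefix (justified by the same sorted-order/triangle-inequality argument, which you in fact spell out more carefully than the paper does), the same Jensen-based case split, and the same Steps 2 and 3 including the single $\log n$ factor. The only cosmetic difference is your choice $\theta = x/(\alpha n)^{2/3}$ rather than the paper's $x/n^{2/3}$, which balances the two cases to give a marginally better dependence on $\alpha$ but yields the identical theorem.
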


\begin{proof}
\textbf{Step 1.}
Assume that OPT collects   a set $S_\text{OPT}$ of $\alpha n$  rewards for some $0\le \alpha \le 1$, in a segment $p$ of length $x = \mbox{log}_\frac{1}{\gamma} (2)$ (i.e. $x$ is the distance from the first reward to the last reward -- it does not include the distance from the starting point to the first reward). Let $d_{\text{min}},d_{\text{max}}$ the shortest and longest distances from $s_0$ to a reward in $S_\text{OPT}$ respectively. By the triangle inequality, $d_{\text{max}} - d_{\text{min}} \le x.$ 
We further assume  that $OPT \le O(\gamma^{d_\text{min}}\alpha n)$ (i.e., That is the value that OPT collects from rewards which are not in $S_\text{OPT}$ is negligible). 

Let $\theta$ be a threshold that we will fix below, and denote by $\{C_j\}$ the CCs of $S_\text{OPT}$ that are created by deleting edges longer than $\theta$ among vertices of $S_\text{OPT}$. By Lemma  \ref{lemma:theta}, we 
 have at most $x/\theta$ CC.
 
 Assume that RA starts at a vertex of a component $C_j$, such that $|C_j|=k$.
 Since the diameter of $C_j$ is at most $(|C_j|-1)\theta$ then it collects its first $k$ vertices (including $s_1$) within a total distance 
 of $2\sum _{i=2}^{k} (i-1)\theta \le k^2 \theta $. 
 So if $k^2 \theta \le x$ then it collects at least $|C_j|$ rewards before traveling a total distance of $x$, and if 
 $k^2 \theta > x$ it collects at least $\lfloor \sqrt{x/\theta} \rfloor$ rewards. (We shall omit the floor function for brevity in the sequal.) 
 It follows that RA collects  $\Omega \left( \mbox{min} \{ |C_j|, \sqrt{\frac{x}{\theta}}\} \right)$ rewards.
Notice that the first random step leads RDFS to a vertex in CC $C_j$ with probability $\frac{|C_j|}{n}$. If more than half of rewards are in CCs s.t $|C_j| \ge \sqrt{\frac{x}{\theta}},$ then
\begin{align*}
\mbox{RA} \ge & \gamma^{d_\text{max}}\sum_{j=1}^{\frac{x}{\theta}} \frac{|C_j|}{n} \cdot \mbox{min}\left\{ |C_j|,\sqrt{\frac{x}{\theta}} \right\} 
\ge  \gamma^{d_\text{max}}\sum_{j: |C_j| \ge \sqrt{\frac{x}{\theta}} } \frac{|C_j|}{n} \cdot \sqrt{\frac{x}{\theta}} \ge \gamma^{d_\text{max}}\frac{\alpha }{2 }  \sqrt{\frac{x}{\theta}}.
\end{align*}
If more than half of rewards in $S_\text{OPT}$ are in CCs such that $|C_j| \le \sqrt{ \frac{x}{\theta}},$ let $s$ be the number of such CCs and notice that $s \le \frac{x}{\theta}.$ We get that: 
\begin{align*}
\mbox{RA} & = \gamma^{d_\text{max}} \sum_{j=1}^{\frac{x}{\theta}} \frac{|C_j|}{n} \cdot \mbox{min}\left\{ |C_j|,\frac{x}{\theta} \right\}   
 \ge  \gamma^{d_\text{max}} \sum_{j: |C_j| \le \sqrt{ \frac{x}{\theta} } } \frac{|C_j|^2}{n} \ge \frac{s}{n} \gamma^{d_\text{max}} \left( \frac{1}{s} \sum_{j=1}^s  |C_j|^2 \right)  \\ 
& \underset{\mbox{Jensen}}{\ge}  \frac{s}{n} \gamma^{d_\text{max}}  \left( \frac{1}{s}\sum_{j=1}^s |C_j| \right) ^2 \ge  \gamma^{d_\text{max}} \frac{\theta \alpha^2 n}{4x}. 
\end{align*}

By setting $\theta = \frac{x}{n^{2/3}}$ we guarantee that the value of RA is at least $\gamma^{d_\text{max}}\alpha^2 n^{1/3} / 4$. Since $d_\text{max}-d_\text{min}\leq x$,
$$
\frac{\text{RA}}{\text{OPT}} \ge \frac{ \gamma^{d_\text{max}} \alpha^2 n^{1/3} / 4}{\gamma^{d_\text{min}} \alpha n} \ge \frac{\alpha \gamma ^{x}}{4n^{2/3}} = \frac{\alpha}{2n^{2/3}} ,
$$ where the last inequality follows from the triangle inequality.

 \textbf{Step 2.} 
Assume that OPT gets its value from $n'<n$ rewards that it collects in a segment of length $x$ (and from all other rewards OPT collects a negligible value). Recall that the NN-RA policy is either NN with probability $0.5$ or RA with probability $0.5$. By picking the single reward closest to the starting point, NN gets at least $1/n'$ of the value of OPT. Otherwise, with probability $n'/n$, RA starts with one of the $n'$ rewards picked by OPT and then, by the analysis of step 1, if it sets $\theta = \frac{x}{(n')^{2/3}}$, RA collects  $\frac{1}{2(n')^{2/3}} $  of the value collected by OPT (we use Step (1) with $\alpha =1$). Thus, $$
\frac{\text{NN-RA}}{\text{OPT}} \ge \frac{1}{2} \cdot \frac{1}{n'} + \frac{1}{2} \cdot \frac{n'}{n} \cdot \frac{1}{2(n')^{2/3}}  = \frac{1}{2n'} + \frac{(n')^{1/3}}{4n}.
$$

This lower bound is smallest when $n' \approx n^{\frac{3}{4}}$, in which case NN-RA collects $\Omega(n^{-3/4})$ of OPT.

\textbf{Step 3.}
By the same arguments from Step 3 in the analysis of NN-RDFS, it follows that $$
\frac{\text{NN-RA}}{\text{OPT}} \ge 
\begin{cases} 
\Omega \left( \frac{n^{-\frac{2}{3}}}{ \log(n)} \right), &\text{if} \enspace \text{OPT} = \Omega (n). \\
\Omega \left( \frac{n^{-\frac{3}{4}}}{\log(n)} \right), &\text{otherwise}. \\
\end{cases} 
$$
\end{proof}

\section{Simulations}

\subsection{Learning Simulations}

We evaluated our algorithms in an MDP that satisfies Definition \ref{def:2}, and in generalized more challenging settings in which the MDP is stochastic. We also evaluated them throughout the learning process of the options, when the option-policies (and value functions)  are sub-optimal.\footnote{We leave this theoretical analysis  to future work.} Note that with sub-optimal options, the agent may reach the reward in sub-optimal time and even may not reach the reward at all. Furthermore, in stochastic MDPs, while executing an option, the agent may arrive at a state where it prefers to switch to a different option rather than completing the execution of the current option. 
 
Our experiments show that our policies perform well, even when some of our assumptions for our theoretical analysis are relaxed.

\textbf{Setup.} An agent (yellow) is placed in a $50$X$50$ grid-world domain (Figure \ref{fig:res_50}, top left). Its goal is to navigate in the maze and collect the available $45$ rewards (teal) as fast as possible. The agent can move by going up, down, left and right, and without crossing walls (red). In the stochastic scenario, there are also four actions, up, down, left, and right, but once an action, say up, is chosen, there is a $10\%$ chance that a random action (chosen uniformly) will be executed instead of up. We are interested in testing our algorithms in the regime where OPT can collect almost all of the rewards within a constant discount (i.e., $OPT\approx\alpha n$), but, there also exist bad tours that achieve a constant value (i.e., taking the most distance reward in each step); thus, we set $\gamma = 1 - \frac{1}{n}.$

The agent consists of a set of options and a policy over the options. We have an option per reward that learns, by interacting with the environment, a policy that moves the agent from its current position to the reward in the shortest way. We learned the options in parallel using Q-learning. We performed the learning in epochs.

At each epoch, we initialized each option in a random state and performed Q-learning steps either until the option found the reward or until $T=150$ steps have passed. Every phase of $K=2000$ epochs ($300k$ steps), we tested our policies with the available set of options. We performed this evaluation for $L=150$ phases, resulting in a total of $45$M training epochs for each option. At the end of these epochs, the policy for each option was approximately optimal.

\textbf{Options:} Figure \ref{fig:res_50} (top, right), shows the quality of the options in collecting the reward during learning in the stochastic MDP. 
For each of the $L$ phases of $K$ epochs ($300k$ steps), we plot the fraction of the runs in which the option reached the reward (red), and the option \textit{time gap} (blue). The option \textit{time gap} is the time that took the option to reach the goal, minus the deterministic shortest path from the initial state to the reward state. We can see that the options improve as learning proceeds, succeeding to reach the reward in more than $99\%$ of the runs. The \textit{time gap} converges, but not zero, since the shortest stochastic path (due to the $10\%$ random environment and the $\epsilon-$greedy policy with $\epsilon=0.1$) is longer than the shortest deterministic path. 

\textbf{Local policies:} At the bottom of Figure \ref{fig:res_50}, we show the performance of the different policies (using the options available at the end of each of the $L$ phases), measured by the discounted cumulative return. In addition to our four policies, we also evaluated two additional heuristics for comparison. The first, denoted by RAND, is a random policy over the options, which selects an option at random at each step. Rand performs the worst since it goes in and out from clusters.  The second, denoted by OPT (with a slight abuse of notation), is a fast approximation to OPT that uses all the information about the MDP (not local) and computes an approximation to OPT by checking all possible choices of the first two clusters and all possible paths through the rewards that they contain and picks the best. This is a good approximation for OPT since discounting makes the rewards collected after the first two clusters negligible. OPT performs better than our policies because it has the knowledge of the full SMDP. On the other hand, our policies perform competitively, without learning the policy over options at all (zero-shot solution). 

Among the local policies, we can see that NN is not performing well since it is "tempted" to collect nearby rewards in small clusters instead of going to large clusters. The stochastic algorithms, on the other hand, choose the first reward at random; thus, they have a higher chance to reach larger clusters, and consequently, they perform better than NN. R-NN and NN-RDFS perform the best and almost the same because effectively, inside the first two clusters, RDFS is taking a tour, which is similar to the one taken by NN. This happens because $\theta$ is larger than most pairwise distances inside clusters. NN-RA performs worse than the other stochastic algorithms, since sorting the rewards by their distances from the first reward in the cluster introduces an undesired ``zig-zag'' behavior, in which we do not collect rewards which are at approximately the same distance from the first in the right order. 

\begin{figure}[h]
\centering
\begin{subfigure}
    \centering
    \includegraphics[width=0.45\linewidth]{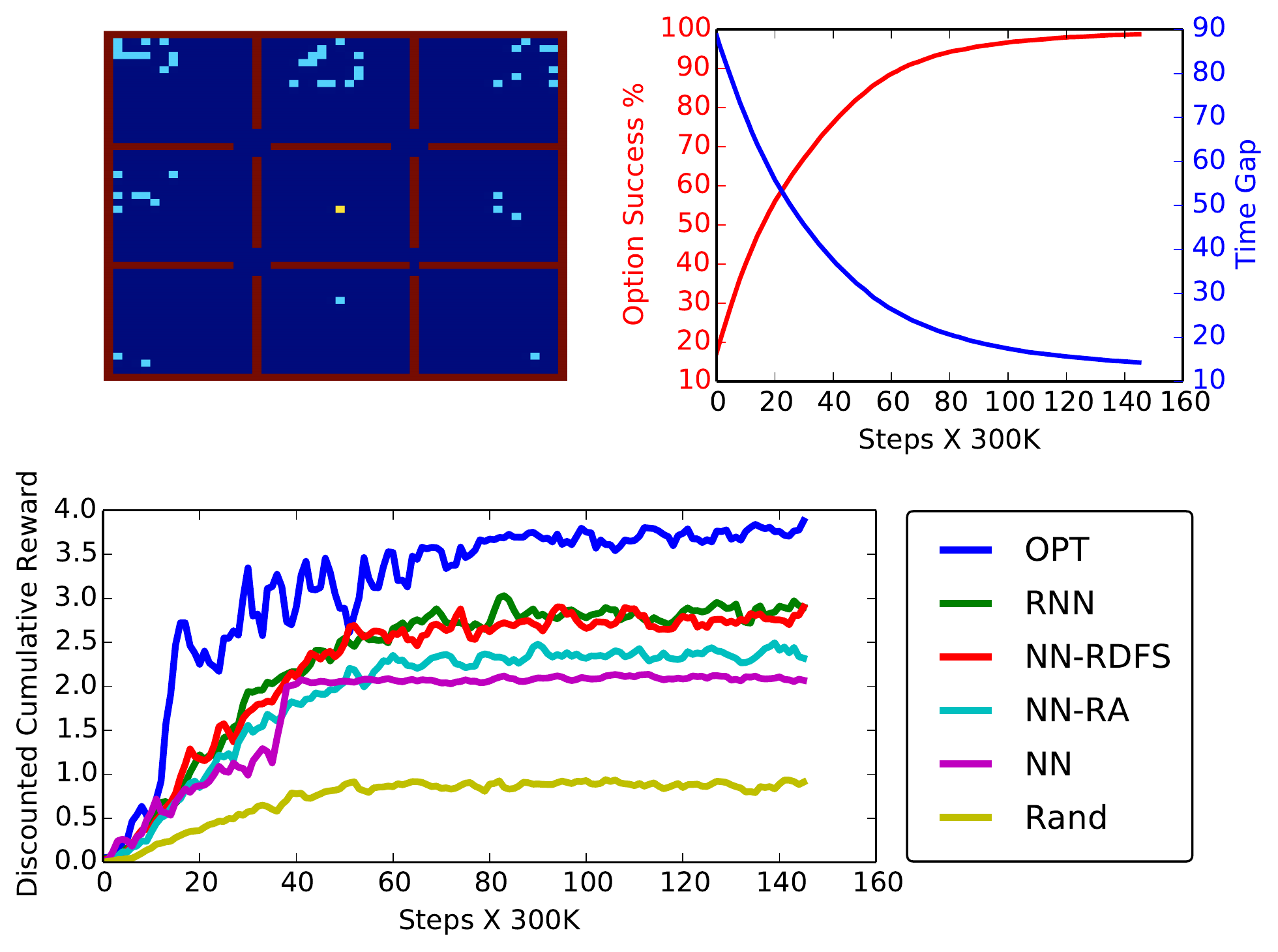}
\end{subfigure}\hspace{-0.5cm}
\begin{subfigure}
    \centering
    \includegraphics[width=0.45\linewidth]{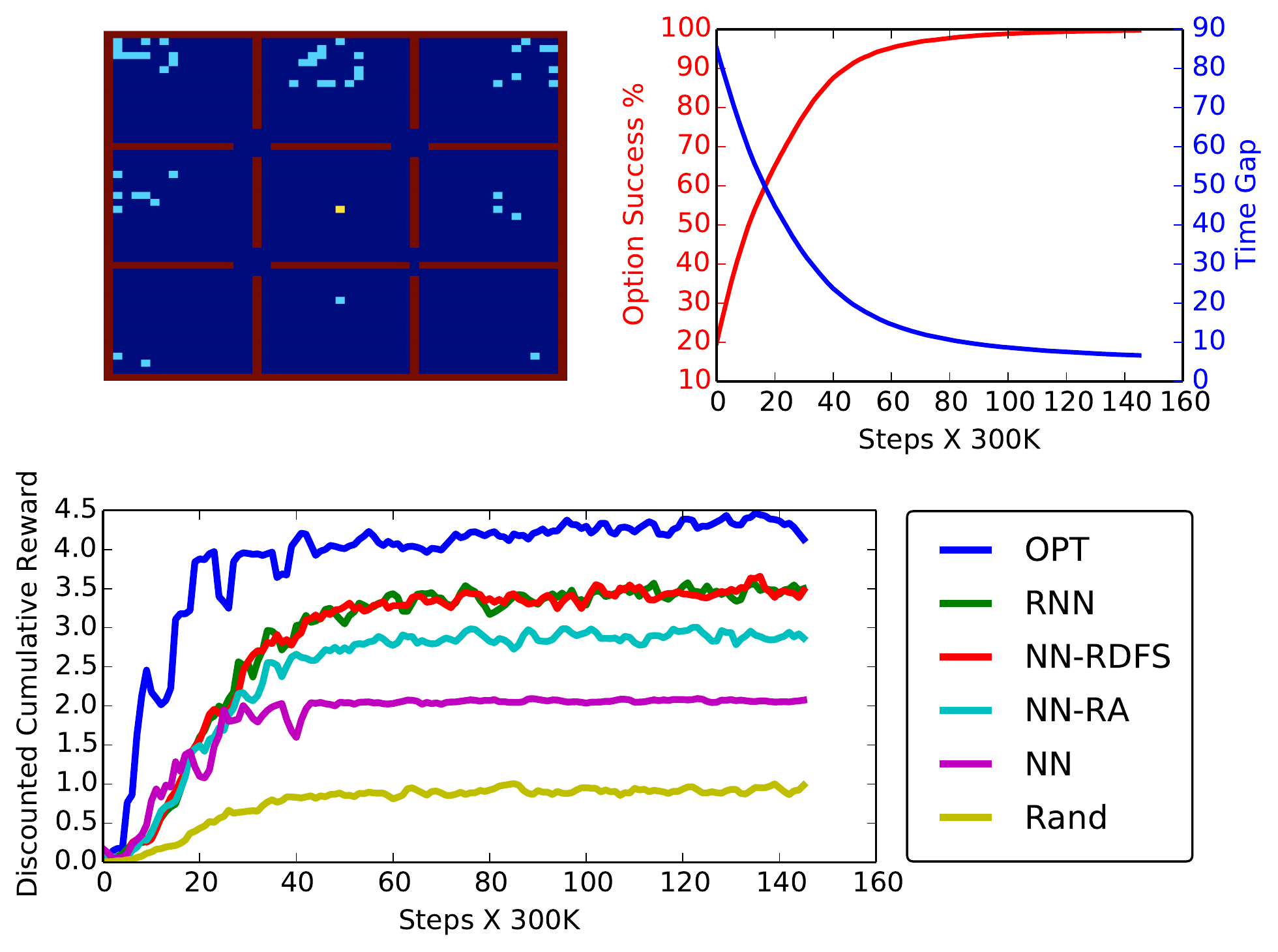}
\end{subfigure}%
\caption{Stochastic (a) and deterministic (b) environments. In each, on the \textbf{top left,} we can see a grid-world environment, where an agent (yellow) collects rewards (teal) in a maze (walls in red). On the \textbf{top, right,} we can see the options success $\%$ (red) and option time gap (blue) during learning; and at the \textbf{bottom,} we can see the final performance of the different local policies during learning.}
\label{fig:res_50}
\end{figure}


\subsection{Planning Simulations}
In this section, we evaluate and compare the performance of deterministic and stochastic local policies by measuring the (cumulative discounted) reward achieved by each algorithm on different MDPs as a function of $n,$ the number of the rewards, with $n\in \{100,200,400,600,800,1000\}.$ For each MDP, the algorithm is provided with the set of optimal options $\left\{o_i \right\}_{i=1}^n$ and their corresponding value functions. We are interested to test our algorithms in the regime where OPT can collect almost all of the rewards within a constant discount (i.e., $OPT\approx\alpha n$), but, there also exist bad tours that achieve a constant value (i.e., taking the most distance reward in each step); thus, we set $\gamma = 1 - \frac{1}{n}.$  We always place the initial state $s_0$ at the origin, i.e., $s_0=(0,0)$. We define $x = \log_{\frac{1}{\gamma}}(2)$, and $\ell=0.01x$ denotes a short distance. 

Next, we describe five MDP types (Figure\ref{fig:vis}, ordered from left to right) that we considered for evaluation. For each of these MDP types, we generate $N_{MDP}=10$ different MDPs, and report the average reward achieved by each Algorithm (Figure \ref{fig:sims}, top), and in the worst-case (the minimal among the $N_{MDP}$ scores) (Figure \ref{fig:sims}, bottom). As some of our algorithms are stochastic, we report average results, i.e., for each MDP, we run each algorithm $N_{alg}=100$ times and report the average score. 

Figure \ref{fig:vis} visualizes these MDPs, for $n=800$ rewards, where each reward is displayed on a 2D grid using gray dots. For each MDP type, we present a single MDP sampled from the appropriate distribution. For the stochastic algorithms, we present the best (Figure \ref{fig:best}) and the worst tours (Figure \ref{fig:vis}), among 20 different runs (for NN we display the same tour since it is deterministic). Finally, for better interpretability, we only display the first $n/k$ rewards of each tour, in which the policy collects most of its value, with $k=8$ $(n/k=100)$ unless mentioned otherwise.\\

\begin{figure}[h]
\centering
\includegraphics[width=0.95\linewidth]{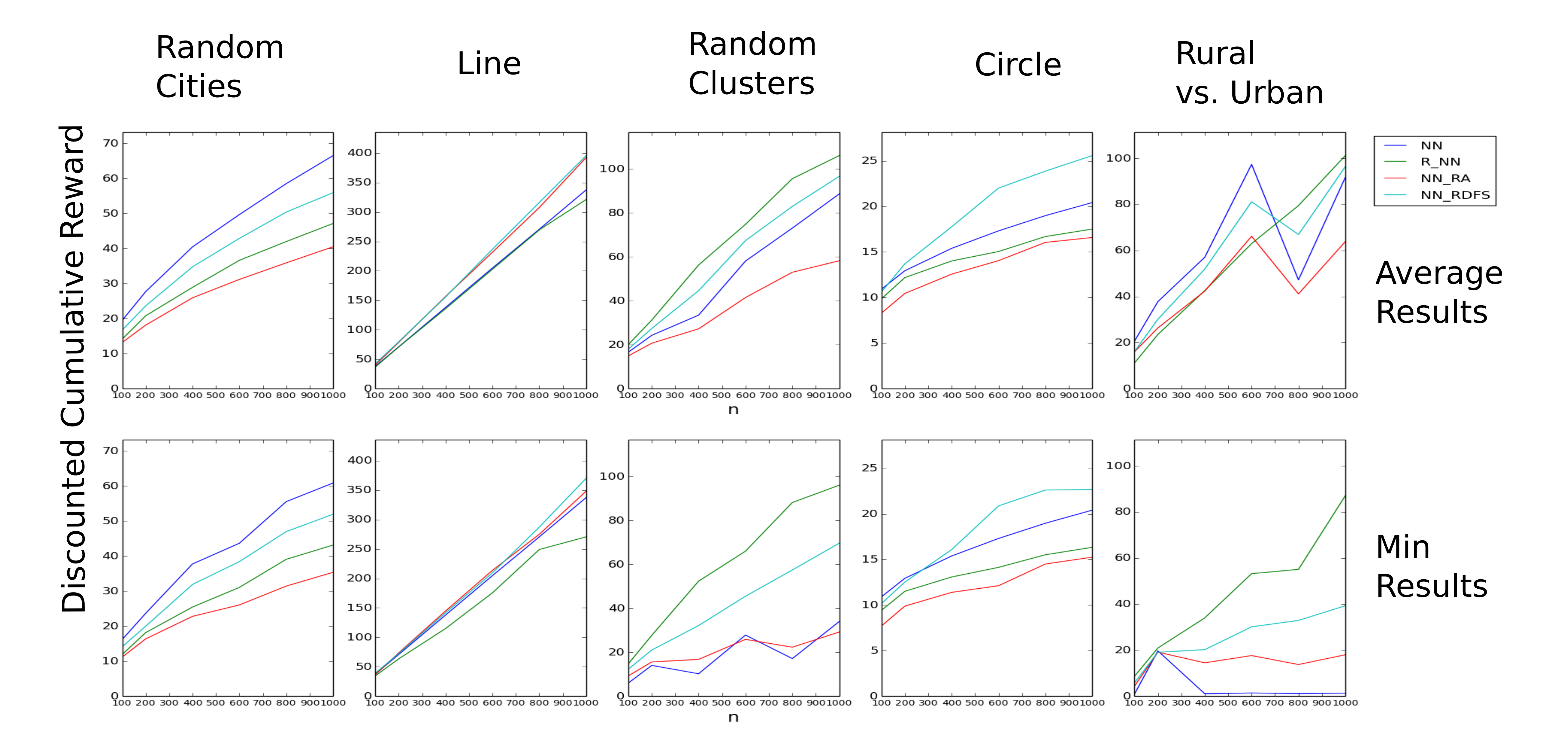}
\caption{Evaluation of deterministic and stochastic local policies over different MDPs. The cumulative discounted reward of each policy is reported for the average and worst case scenarios.}
\label{fig:sims}
\end{figure}
 
\textbf{(1) Random Cities.} For a vanilla TSP with $n$ rewards (nodes) randomly distributed on a $2D$ plane, it is known that the NN Algorithm yields a tour which is  $25\%$ longer than optimal on average \citep{johnson1997traveling}. We used a similar input to compare our algorithms, specifically, we generated an MDP with $n$ rewards $r_i\sim(\text{U}(0,x),\text{U}(0,x)),$ where U is the uniform distribution.

Figure \ref{fig:sims} (left), presents the results for such MDP. we can see that the NN Algorithm performs the best both on the average and in the worst case. This observation suggests that when the rewards are distributed at random, selecting the nearest reward is a reasonable thing to do. In addition, we can see that NN-RDFS performs the best among the stochastic policies (as predicted by our theoretical results). On the other hand, the RA policy performs the worst among stochastic policies. This happens because sorting the rewards by their distances from $s_1$, introduces an undesired ``zig-zag'' behavior while collecting rewards at equal distance from $s_1$ (Figure \ref{fig:vis}).\\

\textbf{(2) Line.} This MDP demonstrates a scenario where greedy algorithms like NN and R-NN are likely to fail. The rewards are located in three different groups; each contains $n/3$ of the rewards. In group 1, the rewards are located in a cluster left to the origin $r_i\sim(\text{U}[-\theta/3-\ell,-\theta/3+\ell],\text{N}(0,\ell)),$, while in group 2 they are located in a cluster right to the origin but a bit closer than group 1 $r_i\sim(\text{U}[\theta/3-3\ell,\theta/3-2\ell],\text{N}(0,\ell))$. Group 3 is also located to the right, but the rewards are placed in increasing distances, such that the $i$-th reward is located at $(\theta/3)2^i$. 

For visualization purposes, we added a small variance in the locations of the rewards at groups 1 and 2 and rescaled the axes. The two vertical lines of rewards represent these two groups, while we cropped the graph such that only the first few rewards in group 3 are observed. Finally, we chose $k=2$, such the first half of the tour is displayed, and we can see the first two groups visited on each tour. 

Inspecting the results, we can see that NN and R-NN indeed perform the worst. To understand this, consider the tour that each Algorithm takes. NN goes to group 2, then 3 then 1 (and loses a lot from going to 3). The stochastic tours depend on the choice of $s_1$. If it belongs to group 1, they collect group1 then 2, then 3, from left to right, and perform relatively the same. If it belongs to group 3, they will first collect the rewards to the left of $s_1$ in ascending order and then come back to collect the remaining rewards to the right, performing relatively the same. However, if $s_1$ is in group 2, then NN-RDFS, NN-RA, will visit group 1 before going to 3, while R-NN is tempted to go to group 3 before going to 1 (and loses a lot from doing it).

\textbf{(3) Random Clusters.} This MDP demonstrates the advantage of stochastic policies. We first randomly place $k=10$ cluster centers $c^j$, $j=1,\ldots,k$ on a circle of radius $x$. Then to draw a reward $r_i$ we first draw a cluster center $c^j$ uniformly and then draw $r_i$ such that $ r_i \sim (\text{U}[c^j_x-10\ell,c^j_x+10\ell],\text{U}[c^j_y-10\ell,c^j_y+10 \ell])$.

This scenario is motivated by maze navigation problems, where collectible rewards are located at rooms (clusters) while in between rooms, there are fewer rewards to collect (similar to Figure \ref{fig:res_50}). Inspecting the results, we can see that NN-RDFS and R-NN perform the best, in particular, in the worst-case scenario. The reason for this is that NN picks the nearest reward, and most of its value comes from rewards collected at this cluster. On the other hand, the stochastic algorithms visit larger clusters first with higher probability and achieve higher value by doing so.\\

\begin{figure}[h]
\centering
\includegraphics[width=\textwidth]{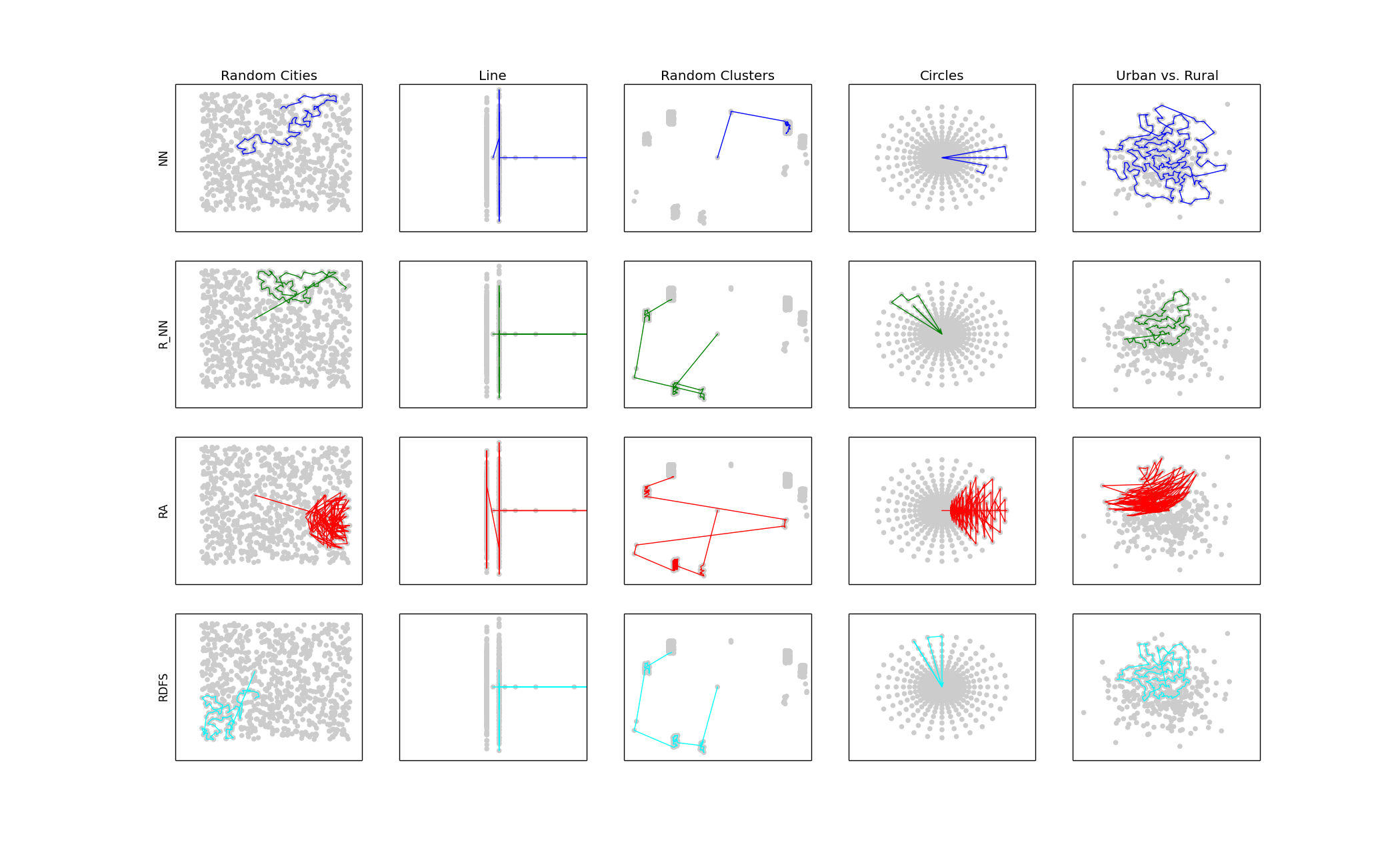}
\caption{Visualization of the worst tours, taken by deterministic and stochastic local policies for $n=800$ rewards.}
\label{fig:vis}
 \end{figure}
  
 \textbf{(4) Circle.} In this MDP, there are $\sqrt{n}$ circles, all centered at the origin, and the radii of the $i$th circle is $\rho_i = \frac{x}{\sqrt{n}}\cdot (1+\frac{1}{\sqrt[4]{n}})^i$. On each circle we place $\sqrt{n}$ rewards place at equal distances. This implies that the distance between adjacent rewards on the same circle is longer than the distance between adjacent rewards on two consecutive circles. 
 
Examining the tours, we can see that indeed NN and R-NN are taking tours that lead them to the outer circles. On the other hand, RDFS and RA are staying closer to the origin. Such local behavior is beneficial for RDFS, which achieves the best performance in this scenario. However, while RA performs well in the best case, its performance is much worse than the other algorithms in the worst case. Hence, its average performance is the worst in this scenario. \\

\textbf{(5) Rural vs. Urban.} Here, the rewards are sampled from a mixture of two normal distributions. Half of the rewards are located in a ``city'', i.e., their position is a Gaussian random variable with a small standard deviation s.t. $r_i\sim(\text{N}(x,\ell),\text{N}(0,\ell))$; the other half is located in a ``village'', i.e., their position is a Gaussian random variable with a larger standard deviation s.t. $r_i\sim(\text{N}(-x,10x),\text{N}(0,10x)).$ To improve the visualization here, we chose $k=2$, such the first half of the tour is displayed. Since half of the rewards belong to the city, choosing $k=2$ ensures that any tour that is reaching the city only the first segment of the tour (until the tour reaches the city) will be displayed.

In this MDP, we can see that in the worst-case scenario, the stochastic policies perform much better than NN. This happens because NN is mistakenly choosing rewards that take it to remote places in the rural area, while the stochastic algorithms remain near the city with high probability and collect its rewards.

\begin{figure}[h]
\centering
\includegraphics[width=\textwidth]{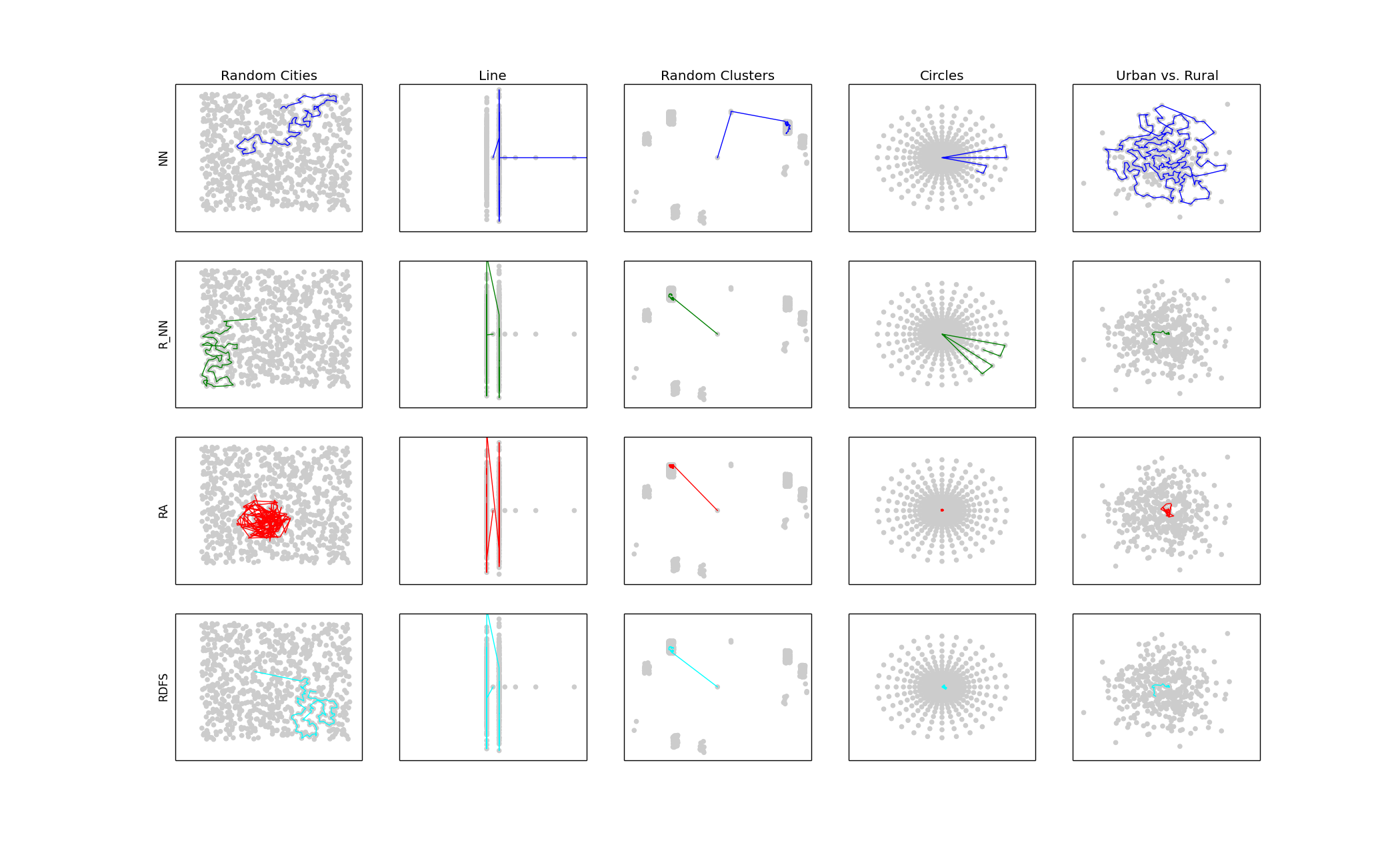}
\caption{Visualization of the best tours, taken by deterministic and stochastic local policies over different RD-TSPs.}
\label{fig:best}
\end{figure}

\section{Related work}
\label{sec:related}

{\bf Pre-defined rules for option selection} are used in several studies. Karlsson et. al. \citep{karlsson1997learning} suggested a policy that chooses greedily with respect to the sum of the local Q-values $\mbox{a}^*={\mbox{argmax}_a} \sum _i Q_i(s,a)$. Humphrys et. al.  \citep{humphrys1996action} suggested to choose the option with the highest local Q-value $\mbox{a}^*={\mbox{argmax}_{a,i}} Q_i(s,a)$ (NN). 

Barreto et al. \citep{barreto2016successor} considered a general transfer learning problem in RL, where the dynamics are shared along with a set of MDPs, and the reward in the $i$-th MDP is linear in some reward features $R_i(s) = w_i ^T \phi (s)$. They suggested using NN (pick the option of maximum value) as the predefined rule for option selection (but referred to it as General Policy Improvement (GPI)), and provided performance guarantees for using GPI in the form of additive (based on regret) error bounds. In contrast, we prove multiplicative performance guarantees for NN and for our stochastic policies. We also proved, for the first time, impossibility results for such local selection rules. Since our Definition \ref{def:2} is a special case of the framework of \citep{barreto2016successor}, our impossibility results apply to their framework as well.

A different approach to tackle these challenges is \textbf{Multi-task learning,} in which we optimize the options in parallel with the policy over options \citep{russell2003q,sprague2003multiple,van2017hybrid}. One method that achieves that goal is the local SARSA algorithm \citep{russell2003q,sprague2003multiple}. Similar to \citep{karlsson1997learning}, a Q function is learned locally for each option (concerning a local reward). However, here the local Q functions are learnt on-policy (using SARSA) with respect to the policy over options $\pi(s)={\mbox{argmax}_a} \sum _i Q_i(s,a),$ instead of being learned off-policy with Q learning. Russel et al. \citep{russell2003q} showed that if the policy over options is being updated in parallel with the local SARSA updates, then the local SARSA algorithm is promised to converge to the optimal value function.

\section{Discussion on discounting}
Throughout the paper, we focused on values of \textbf{$\gamma$} that allows OPT to collect almost all of the rewards within a constant discount (e.g., $\text{OPT}\approx\alpha n$), but, there also exist bad tours that achieve a constant value (i.e., taking the most distance reward in each step). We now briefly discuss what happens for other values of $\gamma$. 

In the limit that $\gamma \rightarrow 0,$ collecting the first reward fastest is most important, and the greedy solution (NN) is near-optimal (optimal when $\gamma=0$ ). 

On the other extinct, when $\gamma \rightarrow 1$, one may think that the RD-TSP is mapped to TSP, however, this is not the case. When $\gamma = 1$, the value function does not depend on the distances between the rewards, and therefore, any policy that visits all the rewards is optimal. To see what happens in the limit, we examine the first order Taylor approximation of $\gamma ^ t$ around $\gamma = 1$ is given by  $\gamma ^ t \approx (1-(1-\gamma) \cdot t)$, thus, $\gamma ^ {\sum t_i} \approx 1- (1-\gamma) \cdot \sum t_i .$ Let's examine what happens to the value function in this case: $V({1,..,n}) = \sum _{i=1} ^n \gamma ^{\sum _{j=i+1} ^n d_{i,j}} \approx \sum _{i=1} ^n 1- (1-\gamma) \cdot \sum _{j=i+1} ^n d_{i,j} $ Thus, for $\gamma \rightarrow 1$ maximizing $V$ is equivalent to minimizing $\sum _{i=1} ^n \sum _{j=i+1} ^n d_{i,j}.$ This formulation is more similar to the TSP, but now each time we visit a city (collect  a reward), the distance to that city is added to all of the distances of the left cities. This means that given a specific tour, the order in which one visits the cities is important, unlike the TSP.

\section{Conclusions}
We establish theoretical guarantees for policies to collect rewards, based on reward decomposition in deterministic MDPs. Using reward decomposition, one can learn many (option) policies in parallel and combine them into a composite solution efficiently. In particular, we focus on approximate solutions that are local, and therefore, easy to implement and do not require substantial computational resources. Local deterministic policies, like NN, are being used for hierarchical RL. Our study provides theoretical guarantees on the reward collected by these policies, as well as impossibility results. Our theoretical results show that these policies outperform NN in the worst case.
 
We tested our policies in a practical maze navigation setup. Our experiments show that our randomized local policies work well compared to the optimal policy and better than the NN policy. Furthermore, we demonstrated that this also holds throughout the options' learning process (when their policies are suboptimal), and even when the actions are stochastic. We expect to see similar results if each option will be learned with function approximation techniques like DNNs \citep{tessler2017deep,bacon2017option}.

\clearpage

\bibliography{main}

\begin{thebibliography}{31}
\providecommand{\natexlab}[1]{#1}
\providecommand{\url}[1]{\texttt{#1}}
\expandafter\ifx\csname urlstyle\endcsname\relax
  \providecommand{\doi}[1]{doi: #1}\else
  \providecommand{\doi}{doi: \begingroup \urlstyle{rm}\Url}\fi

\bibitem[Aumann et~al.(1996)Aumann, Hart, and Perry]{aumann1996absent}
Robert~J Aumann, Sergiu Hart, and Motty Perry.
\newblock The absent-minded driver.
\newblock In \emph{Proceedings of the 6th conference on Theoretical aspects of
  rationality and knowledge}, pages 97--116, 1996.

\bibitem[Bacon et~al.(2017)Bacon, Harb, and Precup]{bacon2017option}
Pierre-Luc Bacon, Jean Harb, and Doina Precup.
\newblock The option-critic architecture.
\newblock \emph{AAAI}, 2017.

\bibitem[Barreto et~al.(2017)Barreto, Munos, Schaul, and
  Silver]{barreto2016successor}
Andre Barreto, Remi Munos, Tom Schaul, and David Silver.
\newblock Successor features for transfer in reinforcement learning.
\newblock \emph{Advances in Neural Information Processing Systems}, 2017.

\bibitem[Barreto et~al.(2018)Barreto, Borsa, Quan, Schaul, Silver, Hessel,
  Mankowitz, Zidek, and Munos]{barreto2018transfer}
Andre Barreto, Diana Borsa, John Quan, Tom Schaul, David Silver, Matteo Hessel,
  Daniel Mankowitz, Augustin Zidek, and Remi Munos.
\newblock Transfer in deep reinforcement learning using successor features and
  generalised policy improvement.
\newblock In \emph{International Conference on Machine Learning}, pages
  510--519, 2018.

\bibitem[Barreto et~al.(2019)Barreto, Borsa, Hou, Comanici, Ayg{\"u}n, Hamel,
  Toyama, Mourad, Silver, Precup, et~al.]{barreto2019option}
Andr{\'e} Barreto, Diana Borsa, Shaobo Hou, Gheorghe Comanici, Eser Ayg{\"u}n,
  Philippe Hamel, Daniel Toyama, Shibl Mourad, David Silver, Doina Precup,
  et~al.
\newblock The option keyboard: Combining skills in reinforcement learning.
\newblock In \emph{Advances in Neural Information Processing Systems}, pages
  13031--13041, 2019.

\bibitem[Barto and Mahadevan(2003)]{barto2003recent}
Andrew~G Barto and Sridhar Mahadevan.
\newblock Recent advances in hierarchical reinforcement learning.
\newblock \emph{Discrete Event Dynamic Systems}, 13\penalty0 (4), 2003.

\bibitem[Beattie et~al.(2016)Beattie, Leibo, Teplyashin, Ward, Wainwright,
  K{\"u}ttler, Lefrancq, Green, Vald{\'e}s, Sadik, et~al.]{beattie2016deepmind}
Charles Beattie, Joel~Z Leibo, Denis Teplyashin, Tom Ward, Marcus Wainwright,
  Heinrich K{\"u}ttler, Andrew Lefrancq, Simon Green, V{\'\i}ctor Vald{\'e}s,
  Amir Sadik, et~al.
\newblock Deepmind lab.
\newblock \emph{arXiv preprint}, 2016.

\bibitem[Bellemare et~al.(2013)Bellemare, Naddaf, Veness, and
  Bowling]{bellemare2013arcade}
Marc~G Bellemare, Yavar Naddaf, Joel Veness, and Michael Bowling.
\newblock The arcade learning environment: An evaluation platform for general
  agents.
\newblock \emph{Journal of Artificial Intelligence Research (JAIR), 47}, 2013.

\bibitem[Bellman(1962)]{bellman1962dynamic}
Richard Bellman.
\newblock Dynamic programming treatment of the travelling salesman problem.
\newblock \emph{Journal of the ACM (JACM)}, 1962.

\bibitem[Blum et~al.(2007)Blum, Chawla, Karger, Lane, Meyerson, and
  Minkoff]{blum2007approximation}
Avrim Blum, Shuchi Chawla, David~R Karger, Terran Lane, Adam Meyerson, and
  Maria Minkoff.
\newblock Approximation algorithms for orienteering and discounted-reward tsp.
\newblock \emph{SIAM Journal on Computing}, 37\penalty0 (2), 2007.

\bibitem[Borsa et~al.(2018)Borsa, Barreto, Quan, Mankowitz, van Hasselt, Munos,
  Silver, and Schaul]{borsa2018universal}
Diana Borsa, Andre Barreto, John Quan, Daniel~J Mankowitz, Hado van Hasselt,
  Remi Munos, David Silver, and Tom Schaul.
\newblock Universal successor features approximators.
\newblock 2018.

\bibitem[Farbstein and Levin(2016)]{farbstein2016discounted}
Boaz Farbstein and Asaf Levin.
\newblock Discounted reward tsp.
\newblock \emph{Algorithmica}, pages 1--24, 2016.

\bibitem[Hansen et~al.(2019)Hansen, Dabney, Barreto, Van~de Wiele,
  Warde-Farley, and Mnih]{hansen2019fast}
Steven Hansen, Will Dabney, Andre Barreto, Tom Van~de Wiele, David
  Warde-Farley, and Volodymyr Mnih.
\newblock Fast task inference with variational intrinsic successor features.
\newblock \emph{arXiv preprint arXiv:1906.05030}, 2019.

\bibitem[Held and Karp(1962)]{held1962dynamic}
Michael Held and Richard~M Karp.
\newblock A dynamic programming approach to sequencing problems.
\newblock \emph{Journal of the Society for Industrial and Applied Mathematics},
  10\penalty0 (1):\penalty0 196--210, 1962.

\bibitem[Humphrys(1996)]{humphrys1996action}
Mark Humphrys.
\newblock Action selection methods using reinforcement learning.
\newblock \emph{From Animals to Animats}, 4, 1996.

\bibitem[Johnson and McGeoch(1997)]{johnson1997traveling}
David~S Johnson and Lyle~A McGeoch.
\newblock The traveling salesman problem: A case study in local optimization.
\newblock \emph{Local search in combinatorial optimization}, 1997.

\bibitem[Karlsson(1997)]{karlsson1997learning}
Jonas Karlsson.
\newblock \emph{Learning to solve multiple goals}.
\newblock PhD thesis, Citeseer, 1997.

\bibitem[Mann et~al.(2015)Mann, Mannor, and Precup]{mann2015approximate}
Timothy~Arthur Mann, Shie Mannor, and Doina Precup.
\newblock Approximate value iteration with temporally extended actions.
\newblock \emph{Journal of Artificial Intelligence Research (JAIR), 53}, pages
  375--438, 2015.

\bibitem[Mnih et~al.(2015)Mnih, Kavukcuoglu, Silver, Rusu, Veness, Bellemare,
  Graves, Riedmiller, Fidjeland, Ostrovski, et~al.]{mnih2015human}
Volodymyr Mnih, Koray Kavukcuoglu, David Silver, Andrei~A Rusu, Joel Veness,
  Marc~G Bellemare, Alex Graves, Martin Riedmiller, Andreas~K Fidjeland, Georg
  Ostrovski, et~al.
\newblock Human-level control through deep reinforcement learning.
\newblock \emph{Nature}, 518\penalty0 (7540), 2015.

\bibitem[Morav{\v{c}}{\'\i}k et~al.(2017)Morav{\v{c}}{\'\i}k, Schmid, Burch,
  Lis{\`y}, Morrill, Bard, Davis, Waugh, Johanson, and
  Bowling]{moravvcik2017deepstack}
Matej Morav{\v{c}}{\'\i}k, Martin Schmid, Neil Burch, Viliam Lis{\`y}, Dustin
  Morrill, Nolan Bard, Trevor Davis, Kevin Waugh, Michael Johanson, and Michael
  Bowling.
\newblock Deepstack: Expert-level artificial intelligence in heads-up no-limit
  poker.
\newblock \emph{Science}, 356, 2017.

\bibitem[Rosenkrantz et~al.(1977)Rosenkrantz, Stearns, and
  Lewis]{rosenkrantz1977analysis}
Daniel~J Rosenkrantz, Richard~E Stearns, and Philip~M Lewis, II.
\newblock An analysis of several heuristics for the traveling salesman problem.
\newblock \emph{SIAM journal on computing}, 6\penalty0 (3):\penalty0 563--581,
  1977.

\bibitem[Russell and Zimdars(2003)]{russell2003q}
Stuart~J Russell and Andrew Zimdars.
\newblock Q-decomposition for reinforcement learning agents.
\newblock \emph{ICML}, 2003.

\bibitem[Silver et~al.(2016)Silver, Huang, Maddison, Guez, Sifre, Van
  Den~Driessche, Schrittwieser, Antonoglou, Panneershelvam, Lanctot,
  et~al.]{silver2016mastering}
David Silver, Aja Huang, Chris~J Maddison, Arthur Guez, Laurent Sifre, George
  Van Den~Driessche, Julian Schrittwieser, Ioannis Antonoglou, Veda
  Panneershelvam, Marc Lanctot, et~al.
\newblock Mastering the game of go with deep neural networks and tree search.
\newblock \emph{Nature}, 529\penalty0 (7587):\penalty0 484--489, 2016.

\bibitem[Sprague and Ballard(2003)]{sprague2003multiple}
Nathan Sprague and Dana Ballard.
\newblock Multiple-goal reinforcement learning with modular sarsa.
\newblock \emph{IJCAI}, 2003.

\bibitem[Sutton and Barto(2018)]{sutton2018reinforcement}
Richard~S Sutton and Andrew~G Barto.
\newblock \emph{Reinforcement learning: An introduction}.
\newblock 2018.

\bibitem[Sutton et~al.(1999)Sutton, Precup, and Singh]{sutton1999between}
Richard~S Sutton, Doina Precup, and Satinder Singh.
\newblock Between mdps and semi-mdps: A framework for temporal abstraction in
  reinforcement learning.
\newblock \emph{Artificial intelligence}, 112\penalty0 (1-2), 1999.

\bibitem[Tesauro(1995)]{tesauro1995temporal}
Gerald Tesauro.
\newblock Temporal difference learning and td-gammon.
\newblock \emph{Communications of the ACM}, 38\penalty0 (3):\penalty0 58--69,
  1995.

\bibitem[Tessler et~al.(2017)Tessler, Givony, Zahavy, Mankowitz, and
  Mannor]{tessler2017deep}
Chen Tessler, Shahar Givony, Tom Zahavy, Daniel~J Mankowitz, and Shie Mannor.
\newblock A deep hierarchical approach to lifelong learning in minecraft.
\newblock \emph{AAAI}, 2017.

\bibitem[van Seijen et~al.(2017)van Seijen, Fatemi, Romoff, Laroche, Barnes,
  and Tsang]{van2017hybrid}
Harm van Seijen, Mehdi Fatemi, Joshua Romoff, Romain Laroche, Tavian Barnes,
  and Jeffrey Tsang.
\newblock Hybrid reward architecture for reinforcement learning.
\newblock \emph{Advances in Neural Information Processing Systems}, 2017.

\bibitem[Vezhnevets et~al.(2017)Vezhnevets, Osindero, Schaul, Heess, Jaderberg,
  Silver, and Kavukcuoglu]{vezhnevets2017feudal}
Alexander~Sasha Vezhnevets, Simon Osindero, Tom Schaul, Nicolas Heess, Max
  Jaderberg, David Silver, and Koray Kavukcuoglu.
\newblock Feudal networks for hierarchical reinforcement learning.
\newblock \emph{Proceedings of the 34th International Conference on Machine
  Learning (ICML-17)}, 2017.

\bibitem[Yao(1977)]{yao1977probabilistic}
Andrew Chi-Chin Yao.
\newblock Probabilistic computations: Toward a unified measure of complexity.
\newblock In \emph{Proceedings of the 18th Annual Symposium on Foundations of
  Computer Science}, pages 222--227. IEEE Computer Society, 1977.

\end{thebibliography}
\clearpage

\section{Exact solutions for the RD-TSP}
\label{sec:exact}
We now present a variation of the Held-Karp Algorithm for the RD-TSP. Note that similar to the TSP, $C(\{S,k\})$ denotes the length of the tour visiting all the cities in $S$, with $k$ being the last one (for TSP, this is the length of the shortest tour). However, our formulation required the definition of an additional recursive quantity, $V(\{S,k\})$, that accounts for the value function (the discounted sum of rewards) of the shortest path. Using this notation, we observe that Held-Karp is identical to doing tabular Q-learning on SMDP $M_s$. Since Held-Karp is known to have exponential complexity, it follows that solving $M_s$ using SMDP algorithms is also of exponential complexity. 

\begin{algorithm}[ht]
\caption{The Held-Karp for the TSP (blue) and RD-TSP (black)}
\label{alg:held-karp}
\begin{algorithmic}
\STATE {\bfseries Input:} Graph $G$, with $n$ nodes

\FOR{$k$ := $2$ to $n$}
\STATE    $\color{blue}{C(\{k\}, k) := d_{1,k}}$
\STATE    $C(\{k\}, k)$ := $d_{1,k}$
\STATE    $V(\{k\}, k)$ := $\gamma^{d_{1,k}} $
\ENDFOR
\FOR{$s$ := $2$ to $n-1$}
    \FOR{all $S \subseteq \{2, . . . , n\}, |S| = s$}
         \FOR{all $k \in S$}
             \STATE $\color{blue}{C(S, k) =\mbox{min}_{m \ne k ,m \in s } [C(S         \setminus \{k\}, m) + d_{m,k}]}$
            \STATE $Q(S,k,a) = [V(S \setminus \{k\}, a) + \gamma ^{C(S \setminus \{k\},a)} \cdot \gamma ^{d_{a,k}} ]$
            \STATE $a^*$ = $\mbox{arg max}_{a \ne k ,a \in s } \enspace Q(S,k,a)$
            \STATE $C(S, k) = C(S \setminus \{k\},a^*) + d_{a^*,k}$
            \STATE $V(S, k) = Q(S,k,a^*)$
        \ENDFOR
    \ENDFOR
\ENDFOR
\STATE $\color{blue}{opt := \mbox{min}_{k\ne 1} [C(\{2,  . . . , n\}, k) + d_{k,1}]}$\\
\STATE $opt := \mbox{max}_{k\ne 1} [V(\{2, \ldots , n\},k) + \gamma ^{C(\{2,  \ldots , n\}, k)+d_{k,1}} $\\
return ($opt$)\\
\end{algorithmic}
\end{algorithm}

\subsection{Exact solutions for simple geometries }
We now provide exact, polynomial-time solutions based on dynamic programming for simple geometries, like a  line and a star. We note that such solutions (exact and polynomial) cannot be derived for general geometries.

\textbf{Dynamic programming on a line (1D):} Given an RD-TSP instance, such that all the rewards are located on a single line (denoted by the integers $1,\ldots,n$ from left to right), it is easy to see that an optimal policy collects, at any time, either the nearest reward to the right or the left of its current location. Thus, at any time, the set of rewards that it has already collected lie in a continuous interval between the first uncollected reward to the left of the origin denoted $\ell$,  and the first uncollected reward to the right of the origin denoted $r$.  The action to take next is either to collect $\ell$ or to collect $r$.

It follows that the state of the optimal agent is uniquely defined by a triple $(\ell,r,c)$, where $c$ is the current location of the agent. Observe that $c \in \{\ell+1,r-1\} $  and therefore there are $O(n^2)$ possible states in which the optimal policy could be. 

Since we were able to classify a state space of polynomial-size, which contains all states of the optimal policy, then we can describe a 
dynamic programming scheme (Algorithm \ref{alg:line}) that finds the optimal policy. The Algorithm
computes a table $V$, where
$V(\ell,r,\rightarrow)$ is
the maximum value we can get by collecting all rewards $1,\ldots \ell$ and $r,\ldots, n$
starting from $r-1$, and  
$V(\ell,r,\leftarrow)$ is defined analogously starting from $\ell+1$. The Algorithm
 first initializes the entries of  $V$ 
 where either $\ell = 0$ or $r=n+1$. These entries correspond to the cases where all the rewards to the left (right) of the agent have been collected.
(In these cases, the agent continues to collect all remaining rewards one by one in their order.) It then iterates over $t$, a counter over the number of rewards that are left to collect. For each value of $t$, we define $S$ as all the combinations of partitioning these $t$ rewards to the right and the left of the agent.
We fill $V$ by increasing the value of $t$. 
To fill an entry $V(\ell,r,\leftarrow)$  such that $\ell + (n+1 -r) = t$ we take the largest among 1) the value to collect $\ell$ and then
the rewards $1,\ldots \ell-1$ and $r,\ldots, n$, appropriately discounted and 2) the value to collect $r$ and then 
$1,\ldots \ell$ and $r+1,\ldots, n$. 
We fill $V(\ell,r,\rightarrow)$ analogously.

The optimal value for starting position $j$ is $1+V(j-1,j+1,\rightarrow)$.
Note that the Algorithm computes the value function; to get the policy, one has merely to track the argmax at each maximization step.

\begin{algorithm}[ht]
\caption{Exact solution on a line}
\label{alg:line}
\begin{algorithmic}
\STATE {\bfseries Input:} Graph G, with n nodes
\STATE Init $V(\cdot,\cdot,\cdot)=0$
\FOR{$t=1,n$}
    \STATE $V(\ell=t,n+1,\rightarrow) := \gamma^{d_{t,n}} \cdot \left(1+\sum _{j=t} ^{2}         \gamma^{\sum _{i=0}^{j} d_{i,i-1}}\right)  $ 
    \STATE $V(0,r=t,\leftarrow) := \gamma^{d_{1,t}} \cdot \left(1+\sum _{j=t} ^{n-1}             \gamma^{\sum _{i=0}^{j} d_{i,i+1}}\right)  $ 
\ENDFOR
\FOR{$t = 2, .. n-1  $}
    \STATE $S = \{ (i,n + 1 - j) | i+j = t \}$
    \FOR{$(\ell,r) \in S$}
        \IF{$V(\ell,r,\leftarrow) = 0$}          
        \STATE \begin{equation*}
        V(\ell,r,\leftarrow) = \mbox{max} \begin{cases}
                \gamma ^{d_{\ell,\ell+1}}  \left[ 1 + V(\ell-1,r,\leftarrow) \right] \\
                \gamma^{d_{\ell+1,r}} \left[ 1 + V(\ell,r+1,\rightarrow) \right] 
             \end{cases}
        \end{equation*}
        \ENDIF                   
        \IF{$V(\ell,r,\rightarrow) = 0$} 
        \STATE \begin{equation*} 
            V(\ell,r,\rightarrow) = \mbox{max} \begin{cases}
                \gamma ^{d_{\ell,r-1}}  \left[ 1 + V(\ell-1,r,\leftarrow) \right] \\
                \gamma^{d_{r-1,r}} \left[ 1 + V(\ell,r+1,\rightarrow) \right] 
             \end{cases}
        \end{equation*}
        \ENDIF                     
    \ENDFOR
\ENDFOR
\end{algorithmic}
\caption{Optimal solution for the RD-TSP on a line.
The rewards are denoted by $1,\ldots, n$ from left to right. We denote by $d_{i,j}$  the distance between reward $i$ and  reward $j$. 
We denote by $V(\ell,r,\rightarrow)$
the maximum value we can get by collecting all rewards $1,\ldots, \ell$ and $r,\ldots,n$ starting from reward $r-1$. 
 Similarly, we  denote by
$V(\ell,r,\leftarrow)$ maximum value we can get by collecting all rewards $1,\ldots, \ell$ and $r,\ldots,n$
starting from $\ell +1$.
If the leftmost (rightmost) reward was collected we define $\ell=0$ ($r=n+1$).}
\end{algorithm}
\clearpage
\textbf{Dynamic programming on a $d$-star:} We consider an RD-TSP instance, such that all the rewards are located on a d-star, i.e., all the rewards are connected to a central connection point via one of $d$ lines, and there are $n_i$ rewards along the i$th$ line. We denote the rewards on the $i$th line by $m^i_j \in \{1,..,n_i\},$ ordered from the origin to the end of the line, and focus on the case where the agent starts at the origin.\footnote{The the more general case is solved by applying Algorithm \ref{alg:line} until the origin is reached followed by Algorithm \ref{alg:dstar}} It is easy to see that an optimal policy collects, at any time, the uncollected reward that is nearest to the origin along one of the $d$ lines. Thus, at any time, the set of rewards that it has already collected lie in $d$ continuous intervals between the origin and the first uncollected reward along each line, denoted by $\bar{\ell}=\{\ell_i\}_{i=1}^d$. The action to take next is to collect one of these nearest uncollected rewards. It follows that the state of the optimal agent is uniquely defined by a tuple $(\bar{\ell},c)$, where $c$ is the current location of the agent. Observe that $c \in \{m^i_{\ell_i-1} \}_{i=1}^d$  and therefore there are $O(dn^{d})$ possible states in which the optimal policy could be. 

\begin{algorithm}[ht]
\caption{Exact solution on a d-star}
\label{alg:dstar}
\begin{algorithmic}
\STATE {\bfseries Input:} Graph G, with n rewards.
\STATE Init $V(\cdot,\cdot)=0$
\FOR{$i \in \{1,..,d\}$}
\FOR{$\ell_i \in \{1,..,n_i\}$}
\FOR{$c \in \{m^1_{n_1},..,m^i_{\ell_i-1},m^d_{n_d}\}$}
        \STATE $V\left(n_1+1,..,l_i,n_d+1,c\right) =  \gamma^{d_{c,m^i_{\ell_i}}}\cdot\left(1+\sum _{j=\ell_i}                                 \gamma^{\sum _{k=0}^{j}d_{k,k+1}}\right)$ 
\ENDFOR
\ENDFOR
\ENDFOR
\FOR{$t = 2, .. n-1  $}
    \STATE $S = \{ \bar{\ell} | \ell_i \in \{1,..,n_i\} \sum \ell_i = n-t \}$
    \FOR{$\bar{\ell} \in S, c \in \{m^i_{\ell_i-1} \}_{i=1}^d$}
        \IF{$V(\bar{\ell},c) = 0$}  
            \STATE $A = \{ m^i_j | j = \ell_i , j\le n_i\} $ 
            \FOR{$a \in A $}
                    \STATE \begin{equation*}
                    V(\bar{\ell},c)= \mbox{max} \begin{cases} 
                    V(\bar{\ell},c)\\
                    \gamma ^{d_{c,a}}  \left[ 1 + V(\bar{\ell}+e_a,a) \right]
                    \end{cases}\end{equation*}
            \ENDFOR
        \ENDIF                   
    \ENDFOR
\ENDFOR
\end{algorithmic}
\caption{Optimal solution for the RD-TSP on a d-star. We denote by $n_i$ the amount of rewards there is to collect on the $i$th line, and denote by $m^i_j \in \{1,..,n_i\}$ the rewards along this line, from the center of the star to the end of that line. We denote by $d_{m^t_i,m^k_j}$  the distance between reward $i$ on line $t$ and reward $j$ on line $k$. The first uncollected reward along each line is denoted by $\ell_i$,
and the maximum value we can get by collecting all the remaining rewards $m^1_{\ell_1} \ldots m^1_{n_1},\ldots,m^d_{\ell_d},\ldots,m^d{n_d}$ starting from reward $c$ is defined by $V(\bar{\ell}=\{\ell_i\}_{i=1}^d,c)$.  If all the rewards were collected on line $i$ we define $\ell_i=n_i+1$.}
\end{algorithm}

Since we were able to classify a state space of  polynomial size which contains all states of the optimal policy then we can describe a dynamic programming scheme (Algorithm \ref{alg:dstar}) that finds the optimal policy. The algorithm computes a table $V$, where $V(\bar{\ell},c)$ is the maximum value we can get by collecting all rewards $\{m^i_{\ell_i} ,\ldots, m^i_{n_i}\}_{i=1}^d$ starting from $c$. The algorithm first initializes the entries of  $V$  where all $\ell_i = n_i+1$ except for exactly one entry. These entries correspond to the cases where all the rewards have been collected, except in one line segment (in these cases the agent continues to collect all remaining rewards one by one in their order.) It then iterates over $t$, a counter over the number of rewards that  are left to collect. For each value of $t$, we define $S$ as all the combinations of partitioning these $t$ rewards among $d$ lines. We fill $V$ by increasing value of $t$.  To fill an entry $V(\bar{\ell},c)$  such that $\sum l_i = n-t$ we take the largest among the values for collecting $\ell_i$ and then the rewards $m^1_{\ell_1} \ldots m^1_{n_1},\ldots, m^i_{\ell_i+1}, \ldots m^i_{n_i},\ldots,m^d_{\ell_d},\ldots,m^d_{n_d} $ appropriately discounted. 

Note that the Algorithm computes the value function; to get the policy, one has merely to track the argmax at each maximization step.

\end{document}